\documentclass[sigconf,nonacm]{acmart}
\AtBeginDocument{%
  }

\setcopyright{acmlicensed}
\copyrightyear{2018}
\acmYear{2018}
\acmDOI{XXXXXXX.XXXXXXX}
\acmConference[Conference acronym 'XX]{Make sure to enter the correct
  conference title from your rights confirmation email}{June 03--05,
  2018}{Woodstock, NY}
\acmISBN{978-1-4503-XXXX-X/2018/06}

\usepackage{booktabs}
\usepackage{multirow}

\usepackage{amsmath,amssymb}
\usepackage{graphicx}
\usepackage{subcaption}
\usepackage{url}
\usepackage{hyperref}
\usepackage{xcolor}
\usepackage{enumitem}
\usepackage{cleveref}
\usepackage[colorinlistoftodos]{todonotes}
\usepackage[table]{xcolor}
\usepackage[ruled,vlined,linesnumbered]{algorithm2e}
\usepackage{enumitem}

\usepackage{tikz}
\usetikzlibrary{
    arrows.meta,
    positioning,
    shapes.geometric
}

\usepackage{graphicx}
\usepackage{float}

\newtheorem{assumption}{Assumption}

\makeatletter
\renewcommand{\@mktitle@iii}{\hsize=\textwidth
    \setbox\mktitle@bx=\vbox{\@titlefont\centering
      \@ACM@title@width=\hsize
      \parbox[t]{\@ACM@title@width}{\centering
        {\normalfont\itshape\normalsize Preprint. Under review.\par}%
        \vskip 0.8em%
        \@titlefont
        \@title\@translatedtitle%
        \ifx\@subtitle\@empty\else
          \par\noindent{\@subtitlefont\@subtitle\@translatedsubtitle}%
        \fi
      }%
      \par\bigskip}}%
\makeatother

\begin{document}

\title{Towards Systematic Generalization for Power Grid Optimization Problems}

\keywords{AI4Grid, AI4Optimization}
\author{Zeeshan Memon}
\email{zeeshan.memon@emory.edu}
\affiliation{%
  \institution{Emory University}
  \city{Atlanta}
  \state{GA}
  \country{USA}
}

\author{Yijiang Li}
\email{yijiang.li@anl.gov}
\affiliation{%
  \institution{Argonne National Laboratory}
  \city{Lemont}
  \state{IL}
  \country{USA}
}

\author{Hongwei Jin}
\email{jinh@anl.gov}
\affiliation{%
  \institution{Argonne National Laboratory}
  \city{Lemont}
  \state{Illinois}
  \country{USA}
}

\author{Kibaek Kim}
\email{kimk@anl.gov}
\affiliation{%
  \institution{Argonne National Laboratory}
  \city{Lemont}
  \state{IL}
  \country{USA}
}

\author{Liang Zhao}
\email{liang.zhao@emory.edu}
\affiliation{%
  \institution{Emory University}
  \city{Atlanta}
  \state{GA}
  \country{USA}
}


\begin{abstract}
AC Optimal Power Flow (ACOPF) and Security-Constrained Unit Commitment (SCUC) are fundamental optimization problems in power system operations. ACOPF serves as the physical backbone of grid simulation and real-time operation, enforcing nonlinear power flow feasibility and network limits, while SCUC represents a core market-level decision process that schedules generation under operational and security constraints. Although these problems share the same underlying transmission network and physical laws, they differ in decision variables and temporal coupling, and prior learning-based approaches address them in isolation, resulting in disjoint models and representations.
We propose a learning framework that jointly models ACOPF and SCUC through a shared graph-based backbone that captures grid topology and physical interactions, coupled with task-specific decoders for static and temporal decision-making. Training includes solver supervision with physics-informed objectives to enforce AC feasibility and inter-temporal operational constraints.  
To evaluate generalization, we assess cross-case transfer on unseen grid topologies for ACOPF and SCUC without retraining, and systematic generalization on the UC-ACOPF problem using unsupervised, physics-based objectives and power dispatch consensus mechanism. Experiments across multiple grid scales demonstrate improved performance and transferability relative to existing learning-based baselines, indicating that the model can support learning across heterogeneous power system optimization problems.
\end{abstract}

\maketitle

\section{Introduction}
Power system operation requires solving a diverse set of large-scale optimization problems over a shared physical transmission network. These problems differ fundamentally in mathematical structure, spanning continuous and discrete decision spaces, static and temporal formulations, and linear approximations as well as highly nonlinear, nonconvex physical constraints. From an optimization perspective, this landscape includes both nonlinear optimizations driven by AC power flow physics and large mixed-integer programs governed by inter-temporal operational requirements, making power grid optimization a representative example of heterogeneous, structured decision-making under strict reliability and latency constraints. Two representative problems that capture this diversity are AC Optimal Power Flow (ACOPF) and Security-Constrained Unit Commitment (SCUC). ACOPF is a static, continuous optimization problem that computes AC-feasible operating points by enforcing nonlinear power flow equations and network limits, while SCUC is a large-scale, multi-period mixed-integer problem that determines which generators to switch on/off and power dispatch decisions over time with minimum generation cost, typically relying on simplified or linearized network models. Despite these differences, both problems are defined on the same grid topology and governed by identical generator--bus relationships and electrical laws, which shape the set of feasible and economically meaningful solutions. In operational practice, such problems are solved repeatedly under tight time constraints, and near-optimal or approximate solutions are routinely preferred over exact ones. Even SCUC itself relies on simplified representations of network physics, reflecting a long-standing trade-off between fidelity and tractability in real-world grid operation~\cite{pandey2023large}. While classical solvers provide strong guarantees, their computational cost can limit applicability at scale, motivating learning-based surrogates that aim to deliver fast, high-quality solutions while respecting physical constraints~\cite{pan2020deepopf}.

Classical optimization solvers form the backbone of power system operation, offering strong feasibility and optimality guarantees for both SCUC and ACOPF. In practice, SCUC is solved using large-scale mixed-integer optimization techniques such as branch-and-bound or branch-and-cut with decomposition and relaxation strategies, while ACOPF is addressed using nonlinear programming methods including interior-point and sequential quadratic programming approaches~\cite{zimmerman1997matpower,frank2012optimal,bienstock2014chance}. Although highly effective, these solvers can incur substantial computational cost for large networks, long planning horizons, or tightly coupled formulations, limiting their applicability in time-critical or large-scale settings.
Motivated by these limitations, learning-based approaches have demonstrated strong performance in accelerating both ACOPF and SCUC, including direct surrogate models~\cite{pan2020deepopf,piloto2024canos,li2025constraint}, learning-guided heuristics, and warm-start methods for classical solvers~\cite{diehl2019warm}. However, most existing approaches address these problems in isolation, training separate models tailored to each task. ACOPF models typically emphasize static, bus-level feasibility and nonlinear power flow physics~\cite{pan2020deepopf,piloto2024canos,trigui2025graph}, whereas SCUC models prioritize temporal decision-making and inter-temporal operational constraints, often under simplified or linearized network representations~\cite{ramesh2023spatio,liu2025multi}. As a result, learned representations re-encode the same grid structure independently and specialize to individual objectives. This fragmentation limits reuse and transfer, making it difficult to adapt learned models to coupled or intermediate problem formulations without retraining from scratch.

These limitations become most pronounced in coupled and hybrid optimization settings, where multiple classes of constraints and decision layers must be satisfied simultaneously. Such problems arise naturally in power system operation, where discrete commitment decisions, nonlinear network physics, and temporal operational constraints interact in nontrivial ways. A representative example is the UC-ACOPF problem, which enforces consistency between unit commitment schedules and AC-feasible power flow at each time step~\cite{gomez2025relax}. Broadly UC-ACOPF exemplifies a growing class of grid optimization problems that combine mixed-integer and nonlinear constraints across time, posing challenges that are substantially more complex than those encountered in isolated formulations.
From a learning perspective, these coupled settings expose a central challenge of systematic generalization: models must generalize not only across instances, but across problem formulations defined by different combinations of constraints, objectives, and decision variables. High-quality supervised labels require repeatedly solving large-scale mixed-integer nonlinear programs, which is computationally expensive and often infeasible at scale~\cite{zhang2023solving}. More importantly, models trained separately for individual tasks lack a shared representation through which heterogeneous decisions—such as commitment and dispatch—can be made when coupled. As a result, learning-based methods do not generalize to new combinations of constraints, tighter coupling regimes, or unseen grid topologies, and extending them to such settings typically requires retraining or additional supervision. This motivates learning grid representations that capture physical structure and remain transferable across diverse optimization problems, a perspective aligned with emerging work on jointly learning related optimization problems and constraint-conditioned models~\cite{drakulic2024goal,hamann2024foundation,cai2025multi}.

To address these challenges, we propose a learning framework that jointly models ACOPF and SCUC through a shared graph-based spatial representation of the power grid. The core idea is to learn a common encoder that captures grid topology and physical interactions, while using task-specific decoders to account for the static nature of ACOPF and the temporal structure of SCUC. Training combines supervision from solver solutions with physics-informed penalties, encouraging the shared representation to remain consistent with both nonlinear AC feasibility and inter-temporal operational constraints.
To assess generalization, we evaluate the jointly trained encoder along two complementary axes. First, we measure cross-case generalization in ACOPF and SCUC by evaluating on unseen grid topologies without retraining. Second, to assess systematic generalization under tighter coupling, we evaluate the shared encoder to the UC-ACOPF setting and adapt only lightweight task-specific components using unsupervised, physics-based objectives that reconcile commitment and dispatch decisions.
\\
{\textbf{Core Contributions. }}
\begin{itemize}[leftmargin=6pt, topsep=3pt]
\item \textbf{Joint modeling of ACOPF and SCUC}  
We propose a learning framework that jointly models ACOPF and SCUC using a shared graph-based backbone that captures grid topology and physical interactions across static and temporal optimization tasks.
\item \textbf{Systematic generalization to UC-ACOPF via unsupervised adaptation.}  
We show that the pre-trained shared encoder can be reused under tighter coupling by adapting only lightweight task-specific components to the UC-ACOPF problem using unsupervised, physics-based objectives, without retraining the encoder.
\item \textbf{Empirical evaluation across tasks and grid scales.}  
We empirically validate the proposed approach on multiple power grid topologies, demonstrating improved performance and transferability on ACOPF, SCUC, and UC-ACOPF relative to baselines.
\end{itemize}
\section{Related Work}
\subsection{Learning-Based Surrogate and Hybrid Models for OPF and SCUC}
Learning-based approaches to optimal power flow (OPF), particularly ACOPF, are commonly categorized into direct surrogate models and hybrid learning–optimization pipelines. 
Direct surrogates learn mappings from system conditions to primal OPF variables (e.g., voltages and generator dispatch), offering large inference speedups over classical solvers but often suffering from feasibility and constraint-violation issues when deployed end-to-end~\cite{pan2020deepopf}.
Recent work mitigates these limitations by embedding physical laws and constraints directly into training, rather than relying solely on supervised solutions~\cite{nellikkath2022physics,varbella2024physics}.
In parallel, hybrid learning-to-optimize methods retain an iterative solver in the loop, using neural models to predict high-quality initializations or intermediate quantities that accelerate convergence while preserving robustness and feasibility guarantees~\cite{xie2025neural}.
Reinforcement learning has also been explored for real-time ACOPF, framing power dispatch objectives as reward signals~\cite{feng2024safe}. For security-constrained unit commitment (SCUC), learning is primarily applied to commitment or dispatch approximation, constraint or network reduction, and warm-starting MILP solvers.
Owing to SCUC’s spatio-temporal structure, prior work combines graph-based spatial encoders with recurrent or transformer-based temporal models to capture ramping and minimum up/down constraints, enabling scalable reduced formulations and fast inference~\cite{ramesh2023spatio,liu2025multi}.
Reinforcement learning–based relaxation methods further target real-time SCUC by trading exact optimality for reduced solution times~\cite{zhu2025reinforcement}.

\subsection{Graph Neural Architectures for Power Grid Problems}
Graph neural networks (GNNs) have become a dominant representation choice for power-grid learning because they align naturally with grid topology and enable inductive biases that scale across operating points and, in some settings, across networks. Early OPF learning largely used homogeneous message passing (e.g., GCN/GAT/MPNN) over bus–branch graphs~\cite{mahto2024gat,suri2025powergnn}, while more recent work emphasizes typed/heterogeneous graphs that explicitly model buses, generators, loads, and multiple edge relations, improving expressivity and interpretability for grid components and controls~\cite{ghamizi2024opf,trigui2025graph,lopez2024optimal}. Architectural variations further include line-graph and dual-graph constructions to explicitly model branch-level interactions~\cite{ebtia2024power,ying2019gnnexplainer}, as well as hierarchical or multi-scale designs that combine local message passing with global aggregation to capture long-range electrical effects~\cite{bianchi2023expressive,zhao2025power}. Attention-based graph transformers extend classical MPNNs by allowing adaptive, long-range interactions beyond fixed-hop neighborhoods~\cite{arowolo2025towards,puny2020global}, while spatio-temporal graph architectures integrate temporal attention or recurrent modules on top of spatial graph encoders to handle time-coupled dynamics~\cite{moradpour2025spatio}. Collectively, these architectural developments reflect a shift from uniform, shallow message passing toward structured, heterogeneous, and spatio-temporal graph representations tailored to the physical and operational characteristics.

\subsection{Multi Task Learning for Optimization Problems}
Multi-task learning and shared-representation training have recently shown strong potential in optimization and decision-making domains. Prior work has explored learning reusable computation or solution structure across tasks, including graph neural networks trained to imitate classical algorithms and sequence-modeling approaches that learn from expert trajectories across multiple tasks~\cite{ibarz2022generalist, stooke2021decoupling}. More recent efforts in optimization propose generalist or foundation-style models which employ a shared backbone with task-specific adapters to solve multiple combinatorial optimization problems and enable transfer across tasks~\cite{drakulic2024goal, reed2022generalist}. Related lines of work on MILP representation learning also reinforces this that a common latent space can be learned to support learning across different optimization formulations~\cite{cai2025multi}. Recent discussions on grid foundation models further support the view that a shared backbone can be leveraged across multiple power grid optimization tasks, but focuses mostly on masked pre-training~\cite{hamann2024foundation, puech2024optimal}.

\section{Methodology}
\subsection{Problem Setup and Preliminaries}
\label{sec:problem_setup}
Power-grid operational tasks form a heterogeneous family of optimization problems defined over a shared physical transmission network, differing in decision variables and constraint structure.
We model the power grid as a typed graph $G=(V,E)$, where nodes represent buses, generators, loads, and shunt elements, and edges represent transmission lines with associated electrical parameters.
Each generator $g \in \mathcal{G} \subset V$ is connected to a designated bus, while loads and shunt elements are attached to bus nodes through typed relations.
This graph structure encodes the physical connectivity and interaction patterns governing power system operation and is shared across all tasks considered in this work.

Decision variables are associated with entities in $G$ and depend on the operational task.
In the AC Optimal Power Flow (ACOPF) problem, decision variables are defined primarily at the bus and generator levels: for each bus, ACOPF optimizes voltage magnitudes and phase angles, while for each generator it determines real and reactive power injections.
These variables are constrained by nonlinear AC power flow equations, voltage magnitude bounds, and transmission line thermal limits, yielding a continuous, static optimization problem.
In contrast, the Security-Constrained Unit Commitment (SCUC) problem introduces temporal and discrete decision-making.
Over a planning horizon of length $T$, SCUC optimizes binary unit commitment indicators and real power dispatch levels for each generator, subject to inter-temporal operational constraints such as ramping limits, minimum up/down time requirements, and generation capacity bounds.
Network constraints are typically enforced using linearized approximations of AC power flow, resulting in a mixed-integer, temporally coupled optimization problem defined on the same graph $G$.

The coupled UC--ACOPF setting requires consistency between unit commitment decisions and AC-feasible power flow solutions at each time step.
Here, commitment decisions constrain admissible dispatch levels, while AC power flow constraints must be satisfied conditional on commitment, yielding a mixed-integer nonlinear optimization problem that is substantially more challenging than either ACOPF or SCUC in isolation.
The detailed mathematical formulations for these problems are given in Appendix~\ref{app:formulations}.

Let $x_{\mathrm{opf}}^\star(G)$ and $x_{\mathrm{uc}}^\star(G)$ denote the respective optimal solutions of ACOPF and SCUC.
Our objective is to learn a shared graph-based encoder that maps the grid structure and features into a common latent representation, from which task-specific decoders can approximate these solution mappings while remaining compatible with the physical and operational constraints of the power system.

\subsection{Shared Spatial Encoder with Task-Specific Decoders}
ACOPF and SCUC are share the same power-grid graph but differ in their input features, decision variables, and constraint structures. To exploit their shared spatial structure while accommodating task-specific information, we construct a common model consisting of task-specific feature encoders, a shared projection layer, and a heterogeneous graph transformer. This produces unified spatial embeddings that support task-specific decoding for both problems. Let $G$ denote the power-grid graph defined in Section~\ref{sec:problem_setup}, and $\mathbf{x}^{(k)}$ denote the task-specific node and edge features associated with task $k\in\{\mathrm{OPF},\mathrm{UC}\}$. Task-specific features are first processed by lightweight MLP encoders and then mapped into a common latent space through a shared projection
\begin{equation}
\label{eq:projection}
\tilde{\mathbf{x}}^{(k)} = \mathrm{Proj}\!\left(\mathbf{x}^{(k)}\right),
\qquad
\tilde{\mathbf{x}}^{(k)} \in \mathbb{R}^{d_0},
\end{equation}
where $\mathrm{Proj}(\cdot)$ is a learnable, task-agnostic mapping shared across ACOPF and SCUC. The projected features are processed by a heterogeneous graph transformer parameterized by $\phi$, yielding node-level spatial representations
\begin{equation}
\label{eq:hetero}
\mathbf{H}_{\mathrm{sp}} = \mathrm{HGT}_{\phi}\!\left(G,\tilde{\mathbf{x}}^{(k)}\right),
\qquad
\mathbf{H}_{\mathrm{sp}} \in \mathbb{R}^{|V|\times d}.
\end{equation}
The projection and transformer parameters are shared across tasks and optimized to capture spatial regularities of the power grid that are invariant to the specific optimization objective.

\paragraph{ACOPF decoding.}
ACOPF predictions are derived from the spatial embeddings as it is static problem. For each bus node $i\in V$, the corresponding embedding $\mathbf{h}^{\mathrm{sp}}_i$ is decoded to voltage magnitude and phase angle,
\begin{equation}
(|V_i|,\theta_i) = \mathrm{MLP}_{\mathrm{opf}}^{\mathrm{bus}}(\mathbf{h}^{\mathrm{sp}}_i).
\end{equation}
For generator nodes $g\in\mathcal{G}$, the same spatial embeddings are decoded to real and reactive power injections,
\begin{equation}
(P_g,Q_g) = \mathrm{MLP}_{\mathrm{opf}}^{\mathrm{gen}}(\mathbf{h}^{\mathrm{sp}}_g).
\end{equation}
These quantities are constrained through the AC power flow equations and network limits, ensuring physical consistency of the OPF solution.

\paragraph{SCUC decoding.}
SCUC introduces a temporal dimension while reusing the same spatial encoder. For each time step $t\in\{1,\dots,T\}$, node embeddings are augmented with time-dependent information prior to temporal reasoning. Specifically, for each node $v\in V$, the input token at time $t$ is constructed as
\begin{equation}
\mathbf{z}^{(0)}_{t,v}
=
\left[
\mathbf{h}^{\mathrm{sp}}_v \;\middle\|\;
\ell_{t,v} \;\middle\|\;
\mathbf{e}_t
\right],
\end{equation}
where $\ell_{t,v}$ denotes the load value at node $v$ and time $t$, and $\mathbf{e}_t$ is a learnable time embedding. For bus nodes, $\ell_{t,v}$ corresponds to the active power demand, while for generator and other node types $\ell_{t,v}=0$. This augmentation injects time-varying demand information directly into the node-level representations. Let $\mathbf{Z}^{(0)}_{t} \in \mathbb{R}^{|V|\times d'}$ denote the matrix obtained by stacking $\mathbf{z}^{(0)}_{t,v}$ over all nodes $v \in V$.
The augmented tokens are projected back to the embedding dimension using a shared projection,
\begin{equation}
\mathbf{H}^{t}_{\mathrm{sp}} = \mathrm{Proj}_{\mathrm{time}}\!\left(\mathbf{Z}^{(0)}_t\right),
\qquad
\mathbf{H}^{t}_{\mathrm{sp}} \in \mathbb{R}^{|V|\times d},
\end{equation}
and the resulting sequence $\{\mathbf{H}^{t}_{\mathrm{sp}}\}_{t=1}^{T}$ is processed by a temporal transformer
\begin{equation}
\mathbf{S}_{\mathrm{UC}} =
\mathrm{TempTransformer}_{\psi}\!\left(\{\mathbf{H}^{t}_{\mathrm{sp}}\}_{t=1}^{T}\right),
\end{equation}
which enables information exchange across nodes and time steps through self-attention.
For each generator node $g\in\mathcal{G}$, the corresponding time-indexed embedding $\mathbf{s}^{\mathrm{UC}}_{t,g}$ is decoded into commitment and dispatch predictions,
\begin{equation}
u_{t,g} = \mathrm{MLP}_{\mathrm{on/off}}(\mathbf{s}^{\mathrm{UC}}_{t,g}),
\qquad
\tilde{p}_{t,g} = \mathrm{MLP}_{\mathrm{dispatch}}(\mathbf{s}^{\mathrm{UC}}_{t,g}),
\end{equation}
where $u_{t,g}$ denotes the unit commitment decision and $\tilde{p}_{t,g}$ is the corresponding real power dispatch proposal.
This ensures that time-varying load information is injected at the token level before temporal attention, allowing generator representations to incorporate demand signals through attention-based message passing, while preserving a shared spatial representation across tasks.

\subsection{Training Objectives and Update Configuration}
\label{sec:loss_and_updates}
Training is performed by supervision from optimal solver solutions together with physics-informed penalties that promote feasibility with respect to power-system constraints. The shared spatial encoder and task-specific decoders are optimized jointly during this stage.

\paragraph{Supervised objectives.}
For ACOPF, predictions are defined over bus and generator nodes. For each bus $i\in V_{\mathrm{bus}}$, the model predicts voltage magnitude and phase angle $(|V_i|,\theta_i)$, while for each generator $g\in\mathcal{G}$ it predicts real and reactive power injections $(P_g,Q_g)$. Given optimal solver outputs $(|V_i|^\star,\theta_i^\star,P_g^\star,Q_g^\star)$, we define a mean-squared error loss
\begin{equation}
\mathcal{L}^{\mathrm{sup}}_{\mathrm{opf}}
=
\sum_{i\in V_{\mathrm{bus}}}
\left\|
(|V_i|,\theta_i) - (|V_i|^\star,\theta_i^\star)
\right\|_2^2
+
\sum_{g\in\mathcal{G}}
\left\|
(P_g,Q_g) - (P_g^\star,Q_g^\star)
\right\|_2^2.
\end{equation}

Similarly for SCUC, predictions are defined on generator nodes across a planning horizon of length $T$. For each generator $g\in\mathcal{G}$ and time step $t$, the model predicts a commitment decision $u_{t,g}$ and a real power dispatch $\tilde{p}_{t,g}$. Given optimal solutions $\{u^\star_{t,g},p^\star_{t,g}\}$, we define
\begin{equation}
\mathcal{L}^{\mathrm{sup}}_{\mathrm{uc}}
=
\sum_{t=1}^{T}\sum_{g\in\mathcal{G}}
\left[
\mathrm{BCE}(u_{t,g},u^\star_{t,g})
+
\left\|
\tilde{p}_{t,g} - p^\star_{t,g}
\right\|_2^2
\right],
\end{equation}
where $\mathrm{BCE}(\cdot,\cdot)$ denotes the binary cross-entropy loss.

\paragraph{Physics-informed losses for ACOPF and SCUC}
To promote AC feasibility, we penalize violations of nodal power balance and thermal line limits induced by the predicted ACOPF variables. Let $\Delta P_i$ and $\Delta Q_i$ denote the active and reactive power mismatches at bus $i$, and let $S_\ell$ denote the apparent power flow magnitude on transmission line $\ell$ with limit $\overline{S}_\ell$. We define
\begin{equation}
\mathcal{L}^{\mathrm{phys}}_{\mathrm{opf}}
=
\sum_{i \in V_{\mathrm{bus}}}
\left(
|\Delta P_i| + |\Delta Q_i|
\right)
\;+\;
\sum_{\ell \in E_{\mathrm{line}}}
\max\!\left(0,\; S_\ell - \overline{S}_\ell \right).
\end{equation}
SCUC feasibility is encouraged through penalties on inter-temporal and operational constraints. Let $R_g^{\uparrow}$ and $R_g^{\downarrow}$ denote ramp-up and ramp-down limits for generator $g$, and let $P_g^{\min}$ and $P_g^{\max}$ denote its capacity limits. Using the predicted dispatch $\tilde{p}_{t,g}$, we define
\begin{align}
\mathcal{L}^{\mathrm{phys}}_{\mathrm{uc}}
&=
\sum_{t=2}^{T}\sum_{g\in\mathcal{G}}
\Big[
\max(0,\tilde{p}_{t,g}-\tilde{p}_{t-1,g}-R_g^{\uparrow})^2
+
\max(0,\tilde{p}_{t-1,g}-\tilde{p}_{t,g}-R_g^{\downarrow})^2
\Big]
\nonumber\\
&\quad+
\sum_{t=1}^{T}\sum_{g\in\mathcal{G}}
\Big[
\max(0,\tilde{p}_{t,g}-u_{t,g}P_g^{\max})^2
+
\max(0,u_{t,g}P_g^{\min}-\tilde{p}_{t,g})^2
\Big].
\label{eq:ref_01}
\end{align}

The final training objective is given by
\begin{equation}
\mathcal{L}_{\mathrm{total}}
=
\alpha\,\mathcal{L}^{\mathrm{sup}}_{\mathrm{opf}}
+
\beta\,\mathcal{L}^{\mathrm{sup}}_{\mathrm{uc}}
+
\gamma\,\mathcal{L}^{\mathrm{phys}}_{\mathrm{opf}}
+
\delta\,\mathcal{L}^{\mathrm{phys}}_{\mathrm{uc}},
\end{equation}
where $\alpha,\beta,\gamma,\delta$ are hyperparameters. As this is a multi-constrained optimization, we adopt GradNorm~\cite{chen2018gradnorm} technique for shared encoder weight updates across tasks to balance gradient magnitudes across heterogeneous task losses.

\subsection{Unsupervised Consensus-Based UC--ACOPF Fine-Tuning}
\label{sec:consensus}
After pre-training on ACOPF and SCUC, we fine-tune for the UC-ACOPF problem in a fully unsupervised manner. In this stage, we \emph{freeze} the shared spatial encoder (and all intermediate representation layers) and update \emph{only} the task-specific decoder heads. The supervision signal comes solely from (i) physics-violation penalties and (ii) a consensus loss that aligns generator real-power predictions across the two heads i.e. ACOPF and SCUC decoders.

Let $\mathbf{H}_{\mathrm{sp}}=\mathrm{HGT}_{\phi}(G,\mathbf{x})$ denote frozen spatial embeddings, where $\phi$ is fixed. The SCUC head predicts commitment and a dispatch proposal
\begin{equation}
(\hat{u}_{t,g},\,\tilde{p}^{\mathrm{UC}}_{t,g})=\mathrm{D}_{\mathrm{UC}}(\mathbf{H}_{\mathrm{sp}}),
\qquad \hat{u}_{t,g}\in(0,1),
\end{equation}
and the ACOPF head produces AC-feasible quantities at each time step from which we extract a generator real-power dispatch
\begin{equation}
\hat{p}^{\mathrm{AC}}_{t,g}=\mathrm{ExtractPg}\!\left(\mathrm{D}_{\mathrm{OPF}}(\mathbf{H}_{\mathrm{sp}})\right).
\end{equation}
We enforce hard coupling between UC and OPF by defining the \emph{effective} dispatch used throughout constraint evaluation as
\begin{equation}
p_{t,g} \;=\; \hat{u}_{t,g}\,\hat{p}^{\mathrm{AC}}_{t,g}.
\label{eq:hard_coupling_pg}
\end{equation}
This guarantees that predicted offline units contribute zero generation. We impose agreement between the SCUC dispatch proposal and the coupled ACOPF dispatch via consensus loss
\begin{equation}
\mathcal{L}_{\mathrm{cons}}
=
\sum_{t=1}^{T}\sum_{g\in\mathcal{G}}
\left\|
\tilde{p}^{\mathrm{UC}}_{t,g}-p_{t,g}
\right\|_2^2.
\label{eq:consensus_pg}
\end{equation}
Physics-informed penalties are computed using the coupled dispatch in~\eqref{eq:hard_coupling_pg} and ramping and capacity violations are penalized similarly as described in Equation~\eqref{eq:ref_01}.
For ACOPF, we penalize AC feasibility violations (e.g., nodal power-balance mismatches and line thermal limit exceedances) induced by the OPF head outputs, denoted compactly as $\mathcal{L}^{\mathrm{phys}}_{\mathrm{opf}}$. The final unsupervised fine-tuning objective for UC-ACOPF is
\begin{equation}
\mathcal{L}_{\mathrm{UC\text{-}ACOPF}}
=
\mathcal{L}_{\mathrm{cons}}
+
\lambda_{\mathrm{opf}}\,\mathcal{L}^{\mathrm{phys}}_{\mathrm{opf}}
+
\lambda_{\mathrm{uc}}\,\mathcal{L}^{\mathrm{phys}}_{\mathrm{uc}},
\end{equation}
and gradients update only the decoder parameters while keeping $\phi$ fixed.

\begin{algorithm}[t]
\caption{Unsupervised consensus-based UC--ACOPF fine-tuning}
\label{alg:consensus_finetune}
\centering
\resizebox{\columnwidth}{!}{%
\begin{minipage}{\columnwidth}
\small
\DontPrintSemicolon
\SetKwInOut{Input}{Input}
\SetKwInOut{Output}{Output}

\Input{
Typed grid graph $G=(V,E)$ with features $\mathbf{x}$;
planning horizon $T$;
frozen shared encoder $\mathrm{HGT}_{\phi}$;
ACOPF decoder $\mathrm{D}_{\mathrm{OPF}}$;
SCUC decoder $\mathrm{D}_{\mathrm{UC}}$;
loss weights $\lambda_{\mathrm{opf}}, \lambda_{\mathrm{uc}}$;
learning rate $\rho$.
}
\Output{
Updated decoder parameters $(\Theta_{\mathrm{OPF}},\Theta_{\mathrm{UC}})$.
}

\BlankLine
\For{each fine-tuning step}{
    \tcp{Frozen spatial encoding}
    $\mathbf{H}_{\mathrm{sp}} \leftarrow \mathrm{HGT}_{\phi}(G,\mathbf{x})$\;

    \BlankLine
    \tcp{SCUC head: commitment and dispatch proposal}
    $(\hat{u}_{t,g},\tilde{p}^{\mathrm{UC}}_{t,g}) \leftarrow 
    \mathrm{D}_{\mathrm{UC}}(\mathbf{H}_{\mathrm{sp}}),
    \quad \forall t,g$\;

    \BlankLine
    \tcp{ACOPF head: AC-feasible generator dispatch}
    $\hat{p}^{\mathrm{AC}}_{t,g} \leftarrow 
    \mathrm{ExtractPg}(\mathrm{D}_{\mathrm{OPF}}(\mathbf{H}_{\mathrm{sp}})),
    \quad \forall t,g$\;

    \BlankLine
    \tcp{Hard UC--OPF coupling}
    $p_{t,g} \leftarrow \hat{u}_{t,g}\,\hat{p}^{\mathrm{AC}}_{t,g},
    \quad \forall t,g$\;

    \BlankLine
    \tcp{Consensus loss}
    $\mathcal{L}_{\mathrm{cons}} \leftarrow
    \sum_{t=1}^{T}\sum_{g\in\mathcal{G}}
    \left\|
    \tilde{p}^{\mathrm{UC}}_{t,g} - p_{t,g}
    \right\|_2^2$\;

    \BlankLine
    \tcp{Physics-informed penalties (coupled dispatch)}
    $\mathcal{L}^{\mathrm{phys}}_{\mathrm{uc}} \leftarrow
    \mathrm{UCPhysPenalty}(p_{t,g},\hat{u}_{t,g})$\;

    $\mathcal{L}^{\mathrm{phys}}_{\mathrm{opf}} \leftarrow
    \mathrm{ACPhysPenalty}(\mathrm{D}_{\mathrm{OPF}}(\mathbf{H}_{\mathrm{sp}}))$\;

    \BlankLine
    \tcp{Final unsupervised objective}
    $\mathcal{L}_{\mathrm{UC\text{-}ACOPF}} \leftarrow
    \mathcal{L}_{\mathrm{cons}}
    + \lambda_{\mathrm{opf}}\,\mathcal{L}^{\mathrm{phys}}_{\mathrm{opf}}
    + \lambda_{\mathrm{uc}}\,\mathcal{L}^{\mathrm{phys}}_{\mathrm{uc}}$\;

    \BlankLine
    \tcp{Decoder-only update (encoder frozen)}
    $(\Theta_{\mathrm{OPF}},\Theta_{\mathrm{UC}}) \leftarrow
    (\Theta_{\mathrm{OPF}},\Theta_{\mathrm{UC}})
    - \rho\,\nabla \mathcal{L}_{\mathrm{UC\text{-}ACOPF}}$\;
}
\end{minipage}
}
\end{algorithm}

\subsection{Theoretical Analysis}
\label{sec:theory}

We provide a theoretical analysis for the unsupervised
consensus-based UC--ACOPF fine-tuning procedure.
Our analysis focuses on the structural correctness of the UC-ACOPF coupling
and on the role of the fine-tuning objective as a penalty-based relaxation
of the coupled UC-ACOPF constraints.

\paragraph{UC--OPF coupling.}
Recall the effective coupled dispatch defined in
Eq.~\eqref{eq:hard_coupling_pg},
\(
p_{t,g} := \hat{u}_{t,g}\,\hat{p}^{\mathrm{AC}}_{t,g}
\).
\begin{lemma}[Shutdown consistency]
\label{lem:shutdown_main}
If $\hat{u}_{t,g}=0$, then $p_{t,g}=0$ regardless of
$\hat{p}^{\mathrm{AC}}_{t,g}$.
Moreover, if $0\le \hat{p}^{\mathrm{AC}}_{t,g}\le P_g^{\max}$ and
$\hat{u}_{t,g}\in[0,1]$, then $p_{t,g}\le \hat{u}_{t,g}P_g^{\max}$.
\end{lemma}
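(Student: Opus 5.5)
The argument works directly from the definition of the coupled dispatch in Eq.~\eqref{eq:hard_coupling_pg}, $p_{t,g}=\hat{u}_{t,g}\,\hat{p}^{\mathrm{AC}}_{t,g}$, treating the two claims of Lemma~\ref{lem:shutdown_main} separately.

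For the first claim, I substitute $\hat{u}_{t,g}=0$ into the product. Since the ACOPF head outputs a finite real number $\hat{p}^{\mathrm{AC}}_{t,g}$, because it is produced by an MLP with finite parameters, we get $p_{t,g}=0\cdot\hat{p}^{\mathrm{AC}}_{t,g}=0$. No assumption on the value of $\hat{p}^{\mathrm{AC}}_{t,g}$ is needed beyond finiteness.

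For the second claim, I multiply the inequality $\hat{p}^{\mathrm{AC}}_{t,g}\le P_g^{\max}$ by the nonnegative scalar $\hat{u}_{t,g}\in[0,1]$. Multiplication by a nonnegative number preserves the inequality, so
\[
p_{t,g}=\hat{u}_{t,g}\,\hat{p}^{\mathrm{AC}}_{t,g}\le \hat{u}_{t,g}P_g^{\max}.
\]
The hypothesis $\hat{p}^{\mathrm{AC}}_{t,g}\ge 0$ is not needed for this upper bound. It does, however, additionally give $p_{t,g}\ge 0$, and together with $\hat{u}_{t,g}\le 1$ it yields $p_{t,g}\le P_g^{\max}$. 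I would note these consequences in passing, since they show that the capacity-violation term $\max(0,p_{t,g}-\hat{u}_{t,g}P_g^{\max})$ in $\mathcal{L}^{\mathrm{phys}}_{\mathrm{uc}}$ vanishes identically under the stated bounds.

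The only delicate point is the domain of $\hat{u}_{t,g}$. In training, $\hat{u}_{t,g}\in(0,1)$ is a sigmoid output, so it never equals $0$ exactly. The first claim should therefore be read as a statement about the coupling map itself, or about the rounded commitment at inference time. I would state this explicitly rather than treat it as a genuine obstacle, since the algebra is elementary.
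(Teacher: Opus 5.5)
Your argument is correct and matches the paper. The paper gives no separate proof and treats the lemma as holding by construction from the coupling $p_{t,g}=\hat{u}_{t,g}\hat{p}^{\mathrm{AC}}_{t,g}$ in Eq.~\eqref{eq:hard_coupling_pg}; substituting $\hat{u}_{t,g}=0$, and multiplying $\hat{p}^{\mathrm{AC}}_{t,g}\le P_g^{\max}$ by the nonnegative $\hat{u}_{t,g}$, is all that is needed. Your side remarks are also accurate: $\hat{p}^{\mathrm{AC}}_{t,g}\ge 0$ is not needed for the upper bound, and the sigmoid output $\hat{u}_{t,g}\in(0,1)$ never equals $0$ in training, which is consistent with the paper's note that this relaxation is used only during training.
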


Lemma~\ref{lem:shutdown_main} shows that the proposed coupling eliminates
spurious generation from offline units while preserving generator
capacity limits by construction.
The relaxation $\hat{u}_{t,g}\in(0,1)$ enables differentiable optimization
and is used only during training.

\paragraph{Penalty-based fine-tuning objective.}
Let $\Theta$ denote the decoder parameters and define the consensus
residual $r(\Theta)$ together with the UC and OPF constraint residuals
$c_{\mathrm{uc}}(\Theta)$ and $c_{\mathrm{opf}}(\Theta)$.
The unsupervised fine-tuning objective is
\begin{align}
\mathcal{L}_{\mathrm{UC\text{-}ACOPF}}(\Theta)
&=
\|r(\Theta)\|_2^2
+ \lambda_{\mathrm{uc}}
\sum_i \bigl[\max(0,c_{\mathrm{uc},i}(\Theta))\bigr]^2
\nonumber\\
&\quad
+ \lambda_{\mathrm{opf}}
\sum_j \bigl[\max(0,c_{\mathrm{opf},j}(\Theta))\bigr]^2 .
\label{eq:penalty_main}
\end{align}
\paragraph{Optimization endpoint.}
Let $\Theta^\star$ denote the parameters obtained at convergence of the
unsupervised fine-tuning procedure, i.e., a point satisfying
\begin{equation}
\|\partial_\Theta
\mathcal{L}_{\mathrm{UC\text{-}ACOPF}}(\Theta^\star)\|
\le \eta
\end{equation}
for some tolerance $\eta>0$.
\begin{assumption}[Bounded constraint gradients]
\label{assump:bounded_grad_main}
The UC and OPF constraint residuals are differentiable and there exists a
constant $L_c>0$ such that
\begin{equation}
\|\nabla_\Theta c(\Theta)\| \le L_c
\end{equation}
for all $\Theta$ in a neighborhood of $\Theta^\star$, where
$c(\Theta)$ denotes the concatenation of
$c_{\mathrm{uc}}(\Theta)$ and $c_{\mathrm{opf}}(\Theta)$.
\end{assumption}
\begin{theorem}[Near-stationarity implies approximate feasibility]
\label{thm:feasibility_main}
Under Assumption~\ref{assump:bounded_grad_main}, if
$\|\partial_\Theta
\mathcal{L}_{\mathrm{UC\text{-}ACOPF}}(\Theta^\star)\|\le \eta$,
then the coupled solution
$(\hat{u}(\Theta^\star),p(\Theta^\star))$
satisfies
\begin{equation}
\sum_i [\max(0,c_{\mathrm{uc},i}(\Theta^\star))]^2
+
\sum_j [\max(0,c_{\mathrm{opf},j}(\Theta^\star))]^2
=
\mathcal{O}\!\left(
\frac{\eta^2}{\min(\lambda_{\mathrm{uc}},\lambda_{\mathrm{opf}})}
\right).
\end{equation}
\end{theorem}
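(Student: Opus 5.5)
The plan is to compute the gradient of the penalty objective \eqref{eq:penalty_main} explicitly and read off a bound on the constraint violations from the near-stationarity condition. Write $[c]_+ := \max(0,c)$ componentwise, and let $J_{\mathrm{uc}}$, $J_{\mathrm{opf}}$, $J_r$ be the Jacobians of $c_{\mathrm{uc}}$, $c_{\mathrm{opf}}$, $r$ with respect to $\Theta$. Since $t\mapsto [t]_+^2$ is $C^1$ with derivative $2[t]_+$, the objective is differentiable wherever the residuals are (Assumption~\ref{assump:bounded_grad_main}). This gives
\begin{equation*}
\partial_\Theta \mathcal{L}_{\mathrm{UC\text{-}ACOPF}}
= 2J_r^\top r + 2\lambda_{\mathrm{uc}} J_{\mathrm{uc}}^\top [c_{\mathrm{uc}}]_+ + 2\lambda_{\mathrm{opf}} J_{\mathrm{opf}}^\top [c_{\mathrm{opf}}]_+ .
\end{equation*}
Stacking the two penalty terms as $2J_c^\top \Lambda [c]_+$, with $\Lambda=\mathrm{diag}(\lambda_{\mathrm{uc}}I,\lambda_{\mathrm{opf}}I)$ and $J_c$ the Jacobian of the concatenated residual $c$, the triangle inequality at $\Theta^\star$ yields
\begin{equation*}
2\|J_c^\top \Lambda [c]_+\| \le \eta + 2\|J_r^\top r\|.
\end{equation*}

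The key step, and the main obstacle, is to turn this bound on $J_c^\top\Lambda[c]_+$ into a bound on $[c]_+$ itself. Assumption~\ref{assump:bounded_grad_main} only gives the upper bound $\|J_c\|\le L_c$, which points in the wrong direction. The estimate genuinely needs a nondegeneracy condition on the \emph{active} (violated) constraints, and I would make it explicit: a LICQ/MFCQ-type lower bound $\|J_{\mathcal{A}}^\top w\|\ge \mu\|w\|$ for all $w\ge 0$ supported on the violated index set $\mathcal{A}$, for some $\mu>0$. I would argue this is the implicit content of the "bounded constraint gradients" hypothesis, with $L_c$ and $\mu$ bounding the conditioning of the Jacobian in a neighbourhood of $\Theta^\star$. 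Since $[c]_+$ is supported on $\mathcal{A}$ and nonnegative, this gives
\begin{equation*}
\|[c]_+\| \le \frac{\eta + 2\|J_r^\top r\|}{2\mu\,\min(\lambda_{\mathrm{uc}},\lambda_{\mathrm{opf}})}.
\end{equation*}

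The second difficulty is the consensus term $J_r^\top r$. My first attempt would be to split $\partial_\Theta\mathcal{L}$ along the range of $J_c^\top$ and its orthogonal complement. Then near-stationarity controls the component of $J_r^\top r$ along the constraint directions by $\eta$ plus the penalty term. Alternatively I would treat $\|J_r^\top r\|=\mathcal{O}(\eta)$ as part of the convergence endpoint, i.e.\ assume the consensus residual is itself nearly stationary, which is the regime in which the decoders reach agreement.

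Either route gives
\begin{equation*}
\|[c]_+\|^2=\mathcal{O}\!\left(\frac{\eta^2}{\min(\lambda_{\mathrm{uc}},\lambda_{\mathrm{opf}})^2}\right).
\end{equation*}
For penalty weights $\min(\lambda_{\mathrm{uc}},\lambda_{\mathrm{opf}})\ge 1$, as used in practice, this is dominated by the claimed $\mathcal{O}(\eta^2/\min(\lambda_{\mathrm{uc}},\lambda_{\mathrm{opf}}))$. Expanding $\|[c]_+\|^2$ as the sum of the two squared-violation terms then gives the statement, with constants depending on $\mu$ and $L_c$. Finally, I would invoke Lemma~\ref{lem:shutdown_main} to note that the capacity constraints $p_{t,g}\le \hat{u}_{t,g}P_g^{\max}$ inside $c_{\mathrm{uc}}$ are inactive by construction. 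This shrinks $\mathcal{A}$, so the nondegeneracy condition only has to hold for the remaining ramping and AC power-balance residuals.
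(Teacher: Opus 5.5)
Your skeleton is the paper's: write the gradient, use the triangle inequality to isolate the penalty term, absorb the consensus term by assuming it is small at convergence, and square. Where you deviate, you are right and the appendix proof is not.

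The paper bounds $\|J_c^\top[c]_+\|\le L_c\|[c]_+\|$ and then reads off an \emph{upper} bound on $\|[c]_+\|$, which does not follow. It also pulls out $\min(\lambda_{\mathrm{uc}},\lambda_{\mathrm{opf}})$ through $\|\lambda_{\mathrm{uc}}a+\lambda_{\mathrm{opf}}b\|\ge\min(\lambda_{\mathrm{uc}},\lambda_{\mathrm{opf}})\|a+b\|$, which fails when $a$ and $b$ partly cancel. Applying your lower bound to $w=\Lambda[c]_+$ avoids that. Finally, squaring its last bound puts $\min(\lambda_{\mathrm{uc}},\lambda_{\mathrm{opf}})^2$ in the denominator, which is the discrepancy you flag. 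Your nondegeneracy step is exactly the ingredient the paper's argument is missing.

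What you must change is the framing, because the two extra hypotheses are not optional.

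First, the lower bound $\|J_{\mathcal{A}}^\top w\|\ge\mu\|w\|$ is not implicit in Assumption~\ref{assump:bounded_grad_main}. An upper bound on a Jacobian never yields a lower one, and without it the statement is false. Take scalar $\Theta$, no consensus term, a single constraint $c(\Theta)=1+\Theta^2$, and $\lambda_{\mathrm{uc}}=\lambda_{\mathrm{opf}}=\lambda$. The objective $\lambda(1+\Theta^2)^2$ has zero gradient at $\Theta=0$, and the assumption holds there with $L_c=2$, yet the violation equals $1$ for every $\eta>0$.

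Second, your first route for the consensus term cannot close. Near-stationarity only controls the sum $2J_r^\top r+2J_c^\top\Lambda[c]_+$, and the two pieces can cancel. For $r(\Theta)=\Theta-1$ and $c(\Theta)=\Theta$, the objective $(\Theta-1)^2+\lambda[\Theta]_+^2$ is exactly stationary at $\Theta=1/(1+\lambda)$. The violation there is $1/(1+\lambda)^2>0$, although the gradient vanishes (so $\eta$ can be taken arbitrarily small) and the constraint is nondegenerate with $\mu=1$. So only your second route works, and $\|J_r^\top r\|=\mathcal{O}(\eta)$ must be assumed. This is the same extra hypothesis the paper invokes when it says the consensus residual is small at convergence; effectively it needs $L_r\|r(\Theta^\star)\|=\mathcal{O}(\eta)$.

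With both hypotheses stated explicitly, and $\min(\lambda_{\mathrm{uc}},\lambda_{\mathrm{opf}})\ge 1$, your argument is a correct proof of a corrected theorem. That is more than the paper's proof delivers.

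The closing appeal to Lemma~\ref{lem:shutdown_main} needs $0\le\hat{p}^{\mathrm{AC}}_{t,g}\le P_g^{\max}$, which the decoder does not guarantee unless its output is bounded by construction. Drop it, since nothing in the argument depends on it.
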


\paragraph{Interpretation.}
Theorem~\ref{thm:feasibility_main} shows that decoder-only fine-tuning
drives the model toward approximate feasibility of a relaxed UC--ACOPF
problem.
The consensus term enforces agreement between UC and OPF dispatch
predictions, while penalty terms reduce violations of inter-temporal UC
constraints and AC power-flow constraints.

\section{Experiments}
\paragraph{Datasets Used}
For ACOPF problem, we used OPFData~\cite{lovett2024opfdata}, a large-scale dataset of ACOPF solutions generated using industry-standard solvers under diverse load perturbration. OPFData provides representative power network topologies, each paired with 300K feasible operating points obtained by perturbing load profiles.
For the SCUC task, we use standard benchmark unit commitment instances provided by the UnitCommitment.jl framework. These instances are derived from widely used MATPOWER test networks and originate from the Power Systems Test Case Archive maintained by the University of Washington~\cite{coffrin2014nesta}. The benchmarks follow established SCUC formulations from the literature and span 365 days, with a 36-hour planning horizon for each network topology. They include detailed generator characteristics, transmission network constraints, reserve requirements, and time-coupled operational constraints such as ramping limits and minimum up/down times.
Each SCUC instance is solved offline using an industry-grade mixed-integer linear programming (MILP) solver (CPLEX) to obtain high-quality reference solutions. The resulting commitment schedules, dispatch trajectories, and constraint-related variables are stored in a structured graph-based representation and used for training and evaluation of learning-based models. We conduct our evaluations on the Case-14, Case-30, Case-57, and Case-118 networks. Additional implementation and dataset details including their features are provided in the Appendix~\ref{app:data}.

\paragraph{Baselines.}
For ACOPF, we compare against CANOS~\cite{piloto2024canos} which is a GNN based approach, graph attention network (GAT)–based power flow model~\cite{le2025dpfaga} and Heterogeneous Graph Transformer. 
For SCUC, we compare against STModel~\cite{ramesh2023spatio}, which combines graph convolutional networks with LSTM-based temporal modeling, and MSTT~\cite{liu2025multi}, a multi-scale spatio-temporal transformer.
To the best of our knowledge, no existing learning-based method jointly models ACOPF and SCUC within a unified architecture or learns representations shared across these heterogeneous optimization tasks. Accordingly, we evaluate multi-task learning performance through controlled ablation studies that compare single-task training against joint training with shared encoders, isolating the effect of representation sharing. This evaluation protocol enables a fair assessment of whether shared model improve performance and generalization beyond task-specific baselines.

\paragraph{Evaluation Metrics.}
We evaluate ACOPF and SCUC using metrics that capture prediction accuracy, physical feasibility, cost optimality, and inference efficiency. 
For ACOPF, prediction accuracy is measured using mean squared error (MSE) at the bus and generator levels (MSE$_\text{Bus}$, MSE$_\text{Gen}$). 
Physical feasibility is assessed via the normalized power balance RMSE (PF Viol.), which measures deviations from the AC nodal power balance equations. 
We also report total AC constraint violation norm (Viol. Norm), which aggregates violations of power balance and transmission line flow limits and is normalized by $\sqrt{\mathcal{N}}$, where $\mathcal{N}$ denotes the number of buses, to reduce sensitivity to network size. 
Cost performance is quantified using the optimality gap relative to the oracle solver-optimal ACOPF cost(Opt. Gap). For SCUC, unit commitment accuracy (Acc) measures correct on/off decisions across generators and time steps, and dispatch quality is evaluated using RMSE on active power generation $P_g$. Constraint satisfaction is quantified by the percentage of violated UC constraints (\% Viol.), including power balance, ramping, and capacity limits, and economic performance is measured using the optimality gap relative to the MILP oracle solution.
Inference efficiency is evaluated by average per-instance inference time under identical settings and compared against classical solver with speedups as the ratio of solver runtime to model inference time.

\subsection{Performance on SCUC and ACOPF}
\begin{table}[t]
\centering
\setlength{\tabcolsep}{4pt}
\renewcommand{\arraystretch}{1.2}
\caption{SCUC performance comparison on \textbf{case-118}. Percentage changes are computed relative to the strongest SCUC-only baseline for each metric.}
\label{tab:scuc_case118}
\resizebox{\columnwidth}{!}{
\begin{tabular}{l|cccc}
\toprule
\textbf{Method}
& Acc $\uparrow$
& RMSE($P_g$) $\downarrow$
& \% Viol. $\downarrow$
& Opt. Gap $\downarrow$ \\
\midrule

STModel
& 85.19\% & 0.28 & 0.50\% & 10.72\% \\

MSTT
& 80.40\% & 0.16 & \textbf{0.03\%} & 11.98\% \\

\midrule
\rowcolor{blue!12}
\textbf{Ours}
& \textbf{88.88\%} ($\uparrow$4.33\%)
& \textbf{0.11} ($\downarrow$31.3\%)
& 0.07\% ($\uparrow$133.3\%)
& \textbf{8.86\%} ($\downarrow$17.3\%) \\

\bottomrule
\end{tabular}
}

\footnotesize{
$\uparrow$ / $\downarrow$ indicate whether higher or lower values are better.
}
\label{tab:scuc}
\end{table}

\begin{table}[t]
\centering
\setlength{\tabcolsep}{4pt}
\renewcommand{\arraystretch}{1.2}
\caption{ACOPF performance comparison on \textbf{case-118}. Percentage changes are computed relative to the strongest ACOPF-only baseline for each metric.}
\label{tab:acopf_case118}
\resizebox{\columnwidth}{!}{
\begin{tabular}{l|cccc}
\toprule
\textbf{Method}
& MSE$_\text{Bus}$ $\downarrow$
& MSE$_\text{Gen}$ $\downarrow$
& PF Viol. $\downarrow$
& Opt. Gap $\downarrow$ \\
\midrule

CANOS
& 0.010 & 0.030 & 0.25 & 0.90\% \\

GAT
& 0.005 & \textbf{0.020} & 0.15 & 2.30\% \\

HGT
& 0.003 & \textbf{0.020} & \textbf{0.04} & 1.90\% \\

\midrule
\rowcolor{blue!12}
\textbf{Ours}
& \textbf{0.002} ($\downarrow$33.3\%)
& \textbf{0.020} (0\%)
& 0.09 ($\uparrow$125\%)
& \textbf{0.80\%} (0\%) \\

\bottomrule
\end{tabular}
}
\footnotesize{
$\uparrow$ / $\downarrow$ indicate whether higher or lower values are better.
}
\label{tab:acopf}
\end{table}
Table~\ref{tab:scuc} and~\ref{tab:acopf} evaluate whether a shared spatial encoder trained jointly on ACOPF and SCUC improves task performance on case-118 against task specific baselines. On SCUC, the shared encoder improves commitment accuracy from 85.19\% to 88.88\% while reducing dispatch RMSE by 31.3\% and optimality gap by 17.3\%. These gains occur despite using the same temporal decoder class as prior work, indicating that improvements originate from the spatial encoder rather than temporal modeling capacity.
Constraint violations increase from 0.03\% to 0.07\% relative to the most conservative baseline, but remain below 0.1\%, showing that joint training trades a small feasibility margin for substantially improved accuracy and cost.
On ACOPF, the shared encoder reduces bus-level MSE from 0.003 to 0.002 while matching the best generator-level MSE (0.020) and achieving the lowest optimality gap (0.80\%). Power-flow violations increase from 0.04 to 0.09, but remain significantly lower than homogeneous GNN baselines. This again reflects a consistent tradeoff: joint training improves predictive and cost accuracy while slightly relaxing feasibility margins, without destabilizing solutions.

\begin{figure}
    \centering
    \includegraphics[width=0.99\linewidth]{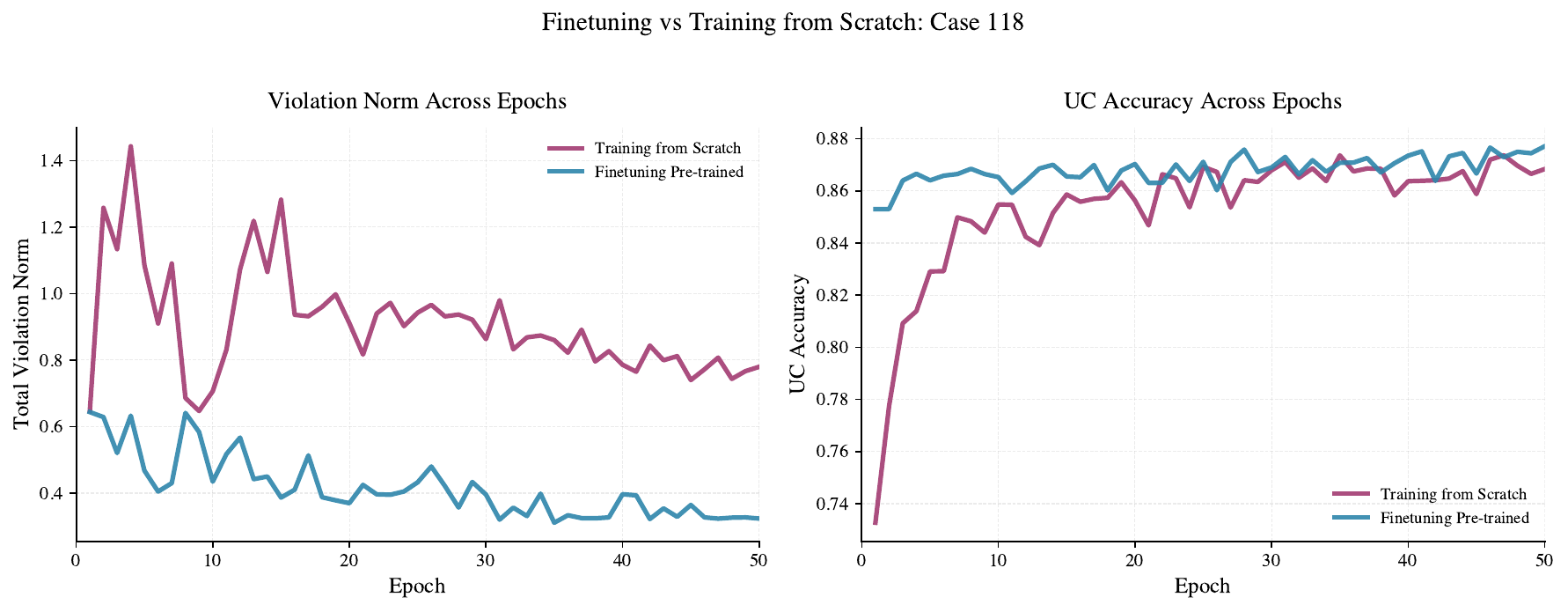}
    \caption{Fine-tuning vs. training from scratch on case-118}
    \label{fig:finetune_vs_scratch_case118}
\end{figure}

\subsection{Cross Case Generalization}
\begin{table*}[t]
\centering
\setlength{\tabcolsep}{3pt}
\renewcommand{\arraystretch}{1.2}
\caption{Cross-case zero shot generalization performance of the proposed shared encoder model. Models are trained on smaller grid cases and evaluated on unseen larger cases.}
\label{tab:cross_case_generalization}
\resizebox{0.99\textwidth}{!}{
\begin{tabular}{c|cccc|ccccc}
\toprule
\multirow{2}{*}{\textbf{Train $\rightarrow$ Test}} 
& \multicolumn{4}{c|}{\textbf{SCUC}} 
& \multicolumn{5}{c}{\textbf{ACOPF}} \\
\cmidrule(lr){2-5}\cmidrule(lr){6-10}
& Acc $\uparrow$ 
& RMSE($P_g$) $\downarrow$ 
& \%Viol. $\downarrow$ 
& Opt. Gap $\downarrow$
& MSE\_BUS $\downarrow$ 
& MSE\_Gen $\downarrow$ 
& PF Viol. $\downarrow$ 
& Viol. Norm $\downarrow$
& Opt. Gap $\downarrow$ \\
\midrule
14 $\rightarrow$ 30 
& \textbf{84.28}\% 
& 0.00367 
& 0.11\% 
& +1.72\% 
& 0.06673 
& 0.04041 
& 0.34 
& 1.251
& -2.98\% \\

14,30 $\rightarrow$ 57 
& \textbf{83.52}\% 
& 0.00814 
& 0.10\% 
& -3.47\% 
& 0.04098 
& 1.03860 
& 0.27 
& 1.602
& -4.44\% \\

14,30,57 $\rightarrow$ 118 
& \textbf{81.80}\% 
& 0.00287 
& 0.53\% 
& +0.61\% 
& 0.00635 
& 0.04431 
& 0.38 
& 2.356
& +0.26\% \\
\bottomrule
\end{tabular}
}
\label{tab:zero_shot_transfer}
\end{table*}
\begin{figure}
    \centering
    \includegraphics[width=0.8\linewidth]{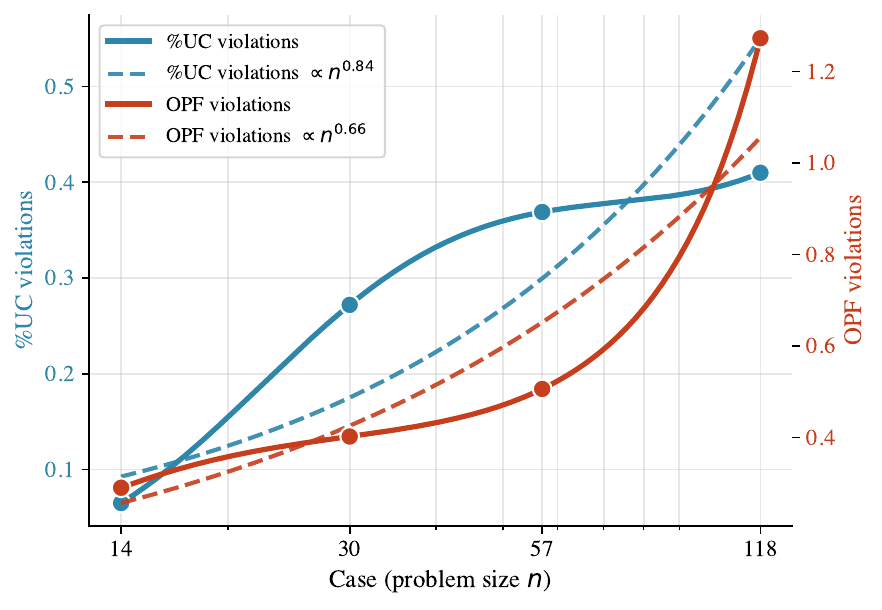}
    \caption{Scaling of constraint violations with grid size. UC and ACOPF violation rates are shown across increasing network sizes (case-14 to case-118), with dashed lines indicating fitted power-law trends.}
    \label{fig:scaling_violations}
\end{figure}
Table~\ref{tab:zero_shot_transfer} evaluates whether the shared model captures grid-invariant structure by testing zero-shot transfer to larger, unseen networks. When trained on case-14 and evaluated on case-30, SCUC accuracy reaches 84.28\% with only 0.11\% constraint violations. As the test grid size increases, performance degrades slightly, achieving 83.52\% accuracy on case-57 and 81.80\% on case-118. Importantly, constraint violations remain bounded below 0.6\%, while optimality gaps stay within $\pm$1\% across all transfer settings.
A similar trend is observed for ACOPF. As additional grid instances are incorporated during training, bus-level mean squared error decreases monotonically, reaching 0.00635 on case-118. This behavior indicates that exposure to multiple network topologies improves representation robustness. Overall, these results reinforce that the shared encoder learns transferable representations that generalize well across network sizes and topologies.
Figure~\ref{fig:scaling_violations} illustrates further how constraint violations scale with problem size. Both UC and ACOPF violations increase smoothly as network size grows from case-14 to case-118, following approximate power-law trends. UC violations exhibit a steeper scaling behavior
($\sim n^{0.84}$) than ACOPF violations ($\sim n^{0.66}$), consistent with the stronger sensitivity of unit commitment to network expansion.
Importantly power-grid feasibility problems such as AC power flow are known to be NP-hard in the worst case under general formulations~\cite{bienstock2019strong}, making controlled scaling behavior nontrivial in practice. In this context, the observed sublinear growth of constraint violations
provides evidence that the learned representation captures salient grid
structure and helps mitigate practical scaling effects.
Figure~\ref{fig:finetune_vs_scratch_case118} further examines cross-case transfer by comparing fine-tuning from pretrained encoders (trained on smaller grids) against training from scratch on the target case.
On case-118, fine-tuning yields substantially lower violation norms throughout training, converging to below 0.35 within 10 epochs, whereas training from scratch exhibits large initial violations exceeding 1.4 and requires over 40 epochs to stabilize near 0.8.
This gap persists across the entire optimization trajectory, indicating that representations learned on smaller grids provide a strong feasibility-aware initialization for larger networks. In terms of commitment accuracy, fine-tuning starts at 85.3\% and improves steadily to 87.8\%, while training from scratch begins at 73.2\% and converges more slowly to a comparable but slightly lower plateau. The faster convergence and reduced variance under fine-tuning supports that the shared encoder transfers structurally meaningful spatial features.

\subsection{Performance on UC-ACOPF Task}
\begin{table}[t]
\centering
\setlength{\tabcolsep}{4pt}
\renewcommand{\arraystretch}{1.2}
\caption{Feasibility and optimality comparison between training from scratch and fine-tuning with a shared encoder on UCACOPF Task.}
\label{tab:quality_scratch_vs_finetune}
\resizebox{\columnwidth}{!}{
\begin{tabular}{c|c|cccc}
\toprule
\textbf{Case}
& \textbf{Setting}
& \textbf{PF Viol.} $\downarrow$
& \textbf{RMSE($P_g$)} $\downarrow$
& \textbf{\% UC Viol.} $\downarrow$
& \textbf{Cost Gap} $\downarrow$ \\
\midrule

\multirow{2}{*}{14}
& Scratch (50 ep)
& 0.73 & 0.25 & 1.8 & 3.9 \\
& \textbf{Finetuned (10 ep)}
& \textbf{0.01} & \textbf{0.05} & \textbf{0.1} & 4.9 \\

\midrule

\multirow{2}{*}{30}
& Scratch (50 ep)
& 0.89 & 0.33 & 0.9 & 6.11 \\
& \textbf{Finetuned (10 ep)}
& \textbf{0.09} & \textbf{0.11} & \textbf{0.9} & \textbf{4.5} \\

\midrule

\multirow{2}{*}{118}
& Scratch (50 ep)
& 1.55 & 0.83 & 1.9 & 5.2 \\
& \textbf{Finetuned (10 ep)}
& \textbf{0.18} & \textbf{0.15} & \textbf{1.3} & \textbf{4.9} \\

\bottomrule
\end{tabular}
}

\footnotesize{
Lower is better for all reported metrics.
}
\label{tab:finetune_vs_scratch}
\end{table}
\begin{figure}
    \centering
    \includegraphics[width=0.8\linewidth]{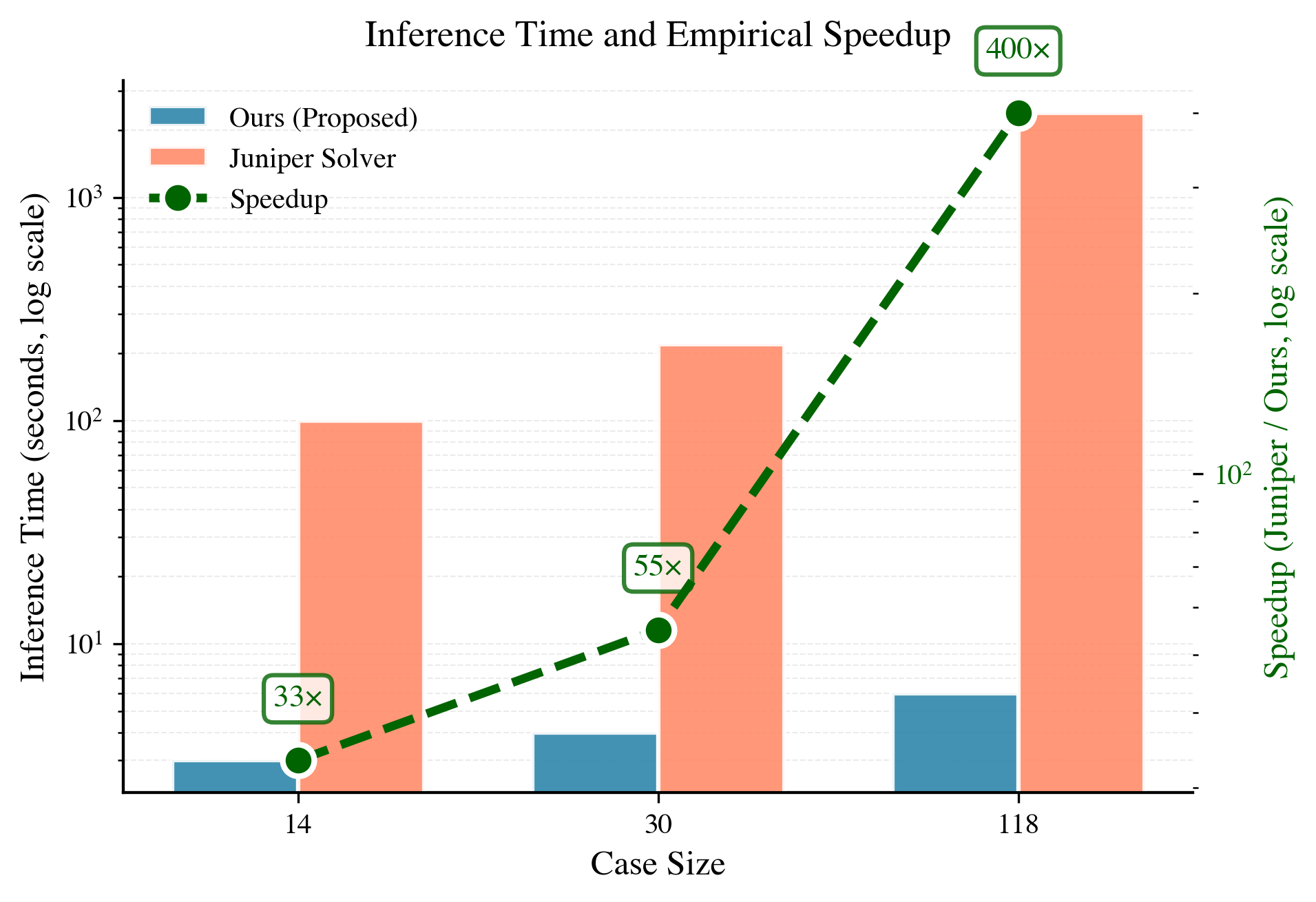}
    \caption{Inference time comparison between the proposed model and the Juniper(oracle solver).}
    \label{fig:inference_speed}
\end{figure}
Table~\ref{tab:finetune_vs_scratch} compares training from scratch against fine-tuning from the shared encoder on the UC--ACOPF task. Fine-tuning consistently yields substantial feasibility improvements across all grid sizes. On case-118, power-flow violations are reduced from 1.55 to 0.18, while dispatch mean squared error decreases from 0.83 to 0.15, despite using $5\times$ fewer training epochs.
Because the shared encoder is frozen during fine-tuning, these improvements can be attributed to the compatibility of the learned representations with both unit commitment and AC power flow constraints. The results indicate that joint ACOPF-SCUC training induces spatial embeddings that already encode much of the structure required for the coupled problem, thereby reducing the burden on task-specific decoders. Figure~\ref{fig:inference_speed} shows inference time comparisons against the Juniper solver for the coupled UC-ACOPF task.
Across all cases, the proposed model achieves two to three orders of magnitude speedup, with per-instance inference times of 3--6 seconds compared to 100--2400 seconds for Juniper, which is oracle solver. The gap widens with network size: on case-118, inference is $400\times$ faster, reflecting the superlinear scaling of mixed-integer nonlinear solvers relative to the fixed-depth neural inference. These results highlight that the proposed approach is not intended to replace oracle solvers in terms of exact optimality, but to provide fast and near-optimal solutions suitable for large-scale or time-critical settings.

\subsection{Ablation Studies}
\label{sec:ablation}

\begin{table}[t]
\centering
\setlength{\tabcolsep}{4pt}
\renewcommand{\arraystretch}{1.2}
\caption{Ablation results on \textbf{case-118} evaluating the contribution of shared projection and heterogeneous modeling.}
\label{tab:ablation}
\resizebox{\columnwidth}{!}{
\begin{tabular}{l|ccccc}
\toprule
\textbf{Variant}
& Acc $\uparrow$
& \% UC Viol. $\downarrow$
& PF Viol. $\downarrow$
& Opt. Gap (SCUC) $\downarrow$
& Opt. Gap (ACOPF) $\downarrow$ \\
\midrule

\rowcolor{blue!12}
\textbf{Ours}
& \textbf{88.88}
& \textbf{0.07}
& \textbf{0.09}
& \textbf{8.86}
& \textbf{0.80} \\

\midrule
- shared projection
& 86.83
& 0.21
& 0.10
& 8.84
& 4.11 \\

- heterogeneous modeling
& 85.40
& 0.50
& 0.29
& 9.12
& 2.20 \\

\bottomrule
\end{tabular}
}
\footnotesize{
Lower is better for all metrics except accuracy. "-" indicates without that ablated that component.
}
\end{table}
Table~\ref{tab:ablation} quantifies the contribution of shared projection and heterogeneous modeling to joint SCUC--ACOPF performance.
Removing the shared projection~\eqref{eq:projection} reduces SCUC accuracy from 88.88\% to 86.83\% and increases UC violations from 0.07\% to 0.21\%, while leaving the SCUC optimality gap nearly unchanged (8.86\% vs.\ 8.84\%).
In contrast, ACOPF optimality degrades substantially, with the optimality gap increasing from 0.80\% to 4.11\%.
This indicates that naive feature pooling across tasks produces representations that remain competitive for SCUC but are misaligned with ACOPF feasibility and cost structure, highlighting the role of the shared projection in enforcing cross-task representational compatibility.
Removing heterogeneous model~\eqref{eq:hetero} with homogeneous one causes a broader and more uniform degradation.
SCUC accuracy drops to 85.40\%, UC violations increase to 0.50\%, and power-flow violations rise from 0.09 to 0.29.
Both SCUC and ACOPF optimality gaps worsen (9.12\% and 2.20\%, respectively), indicating reduced ability to capture generator-bus interactions and network constraints when relational types are ignored. Overall, the ablations show that improvements from joint training do not arise from parameter sharing alone and effective transfer across SCUC and ACOPF requires both a shared projection to align task representations and heterogeneous modeling.

\section{Conclusion}
In this paper, we proposed a joint learning framework for ACOPF and SCUC that leverages a shared graph-based encoder to capture grid topology and physical interactions across static and temporal optimization problems. By training a common spatial representation with task-specific decoders, the model enables transfer across grid topologies and reuse on complex problem which couples these two tasks. We demonstrated that the pretrained encoder can be adapted to the UC–ACOPF setting using unsupervised, physics-informed objectives without retraining, yielding improved feasibility and faster convergence compared to training from scratch. These results indicate that learning shared, physically grounded representations is an effective approach for systematic generalization across heterogeneous power grid optimization problems.

\section{Limitations and Ethical Considerations}
This work focuses on a family of power grid optimization problems that share a common network structure and physical laws, and the proposed generalization results should be interpreted within this setting. While the framework demonstrates effective transfer across ACOPF, SCUC, and their coupled formulation UC-ACOPF, extending the approach to problems with substantially different constraint structures or operational objectives can be interesting future direction.
From an ethical perspective, this work focuses on methodological advances in optimization and does not introduce new privacy or societal risks; however, any operational use of learning-based surrogates in power systems should be accompanied by appropriate safeguards, validation, and human oversight to mitigate risks arising from approximation errors.
\begin{acks}
This material is based upon work supported by Laboratory Directed Research and Development (LDRD) funding from Argonne National Laboratory, provided by the Director, Office of Science, of the U.S. Department of Energy under contract DE-AC02-06CH11357.

An award of computer time was provided by the ASCR Leadership Computing Challenge (ALCC) program. This research used resources of the Argonne Leadership Computing Facility, which is a U.S. Department of Energy Office of Science User Facility operated under contract DE-AC02-06CH11357.

This research used resources of the National Energy Research Scientific Computing Center (NERSC), a Department of Energy User Facility using NERSC award ALCC-ERCAP0038201.

The submitted manuscript has been created by UChicago Argonne, LLC, Operator of Argonne National Laboratory (``Argonne”). Argonne, a U.S. Department of Energy Office of Science laboratory, is operated under Contract No. DE-AC02-06CH11357. The U.S. Government retains for itself, and others acting on its behalf, a paid-up nonexclusive, irrevocable worldwide license in said article to reproduce, prepare derivative works, distribute copies to the public, and perform publicly and display publicly, by or on behalf of the Government. The Department of Energy will provide public access to these results of federally sponsored research in accordance with the DOE Public Access Plan (http://energy.gov/downloads/doe-public-access-plan).
\end{acks}

\bibliographystyle{ACM-Reference-Format}
\bibliography{references}

\appendix
\section{Theoretical Details}
\label{app:theory}

This appendix provides additional theoretical details supporting
Section~\ref{sec:theory}.
We present proofs of the feasibility result, clarify the treatment of
nonsmooth penalty terms, and formalize a notion of consensus-stationarity
for the proposed unsupervised UC--ACOPF fine-tuning procedure.

\subsection{Proof of Theorem~\ref{thm:feasibility_main}}
\label{app:proof_feasibility}

We restate the objective for convenience:
\begin{equation}
\label{eq:appendix_penalty}
\begin{aligned}
\mathcal{L}_{\mathrm{UC\text{-}ACOPF}}(\Theta)
&= \|r(\Theta)\|_2^2 + \lambda_{\mathrm{uc}}
\sum_i \bigl[\max(0,c_{\mathrm{uc},i}(\Theta))\bigr]^2 \\
&\quad + \lambda_{\mathrm{opf}}
\sum_j \bigl[\max(0,c_{\mathrm{opf},j}(\Theta))\bigr]^2 .
\end{aligned}
\end{equation}

where $r(\Theta)$ denotes the consensus residual,
and $c_{\mathrm{uc}}(\Theta)$ and $c_{\mathrm{opf}}(\Theta)$ denote the
UC and ACOPF constraint residual vectors evaluated on the coupled
dispatch $p(\Theta)$.

\paragraph{Subgradient expression.}
Since hinge penalties $\max(0,\cdot)$ are nonsmooth, we work with Clarke
subgradients.
A subgradient of~\eqref{eq:appendix_penalty} can be written as
\begin{align}
\partial_\Theta \mathcal{L}
&=
2\nabla_\Theta r(\Theta)^\top r(\Theta)
\nonumber\\
&\quad+
2\lambda_{\mathrm{uc}}
\nabla_\Theta c_{\mathrm{uc}}(\Theta)^\top
\max(0,c_{\mathrm{uc}}(\Theta))
+
2\lambda_{\mathrm{opf}}
\nabla_\Theta c_{\mathrm{opf}}(\Theta)^\top
\max(0,c_{\mathrm{opf}}(\Theta)).
\label{eq:appendix_subgrad}
\end{align}

\paragraph{Bounding constraint violations.}
Let $\Theta^\star$ satisfy $\|\partial_\Theta \mathcal{L}(\Theta^\star)\|\le \eta$.
From the subgradient expression~\eqref{eq:appendix_subgrad} and the triangle
inequality, we obtain:
\begin{align}
\eta
&\ge
\Biggl\|
\begin{aligned}
&2\lambda_{\mathrm{uc}}\,
\nabla_\Theta c_{\mathrm{uc}}(\Theta^\star)^\top
\max\!\bigl(0,c_{\mathrm{uc}}(\Theta^\star)\bigr) \\
&\quad+
2\lambda_{\mathrm{opf}}\,
\nabla_\Theta c_{\mathrm{opf}}(\Theta^\star)^\top
\max\!\bigl(0,c_{\mathrm{opf}}(\Theta^\star)\bigr)
\end{aligned}
\Biggr\|
\nonumber\\
&\ge
2\min(\lambda_{\mathrm{uc}},\lambda_{\mathrm{opf}})
\Bigl\|
\nabla_\Theta c(\Theta^\star)^\top
\max\!\bigl(0,c(\Theta^\star)\bigr)
\Bigr\|
\nonumber\\
&\quad-
2\|\nabla_\Theta r(\Theta^\star)\|\,
\|r(\Theta^\star)\| .
\label{eq:eta_lower_bd}
\end{align}
Now, by Assumption~\ref{assump:bounded_grad_main}, $\|\nabla_\Theta c(\Theta^\star)\|\le L_c$,
hence
\begin{align}
\Bigl\|
\nabla_\Theta c(\Theta^\star)^\top
\max\!\bigl(0,c(\Theta^\star)\bigr)
\Bigr\|
&\le
\|\nabla_\Theta c(\Theta^\star)\|\;
\|\max\!\bigl(0,c(\Theta^\star)\bigr)\|
\nonumber\\
&\le
L_c\;
\|\max\!\bigl(0,c(\Theta^\star)\bigr)\| .
\label{eq:jac_bd}
\end{align}

If we additionally assume $\|\nabla_\Theta r(\Theta)\|\le L_r$ locally
(and either $\|r(\Theta^\star)\|$ is small in practice), then combining~\eqref{eq:eta_lower_bd}--\eqref{eq:jac_bd}
yields
\begin{equation}
\|\max(0,c(\Theta^\star))\|
\le
\mathcal{O}\!\left(
\frac{\eta + L_r\|r(\Theta^\star)\|}{\min(\lambda_{\mathrm{uc}},\lambda_{\mathrm{opf}})}
\right).
\end{equation}
When the consensus residual is small at convergence, the bound reduces to
\begin{equation}
\|\max(0,c(\Theta^\star))\|
\le
\mathcal{O}\!\left(
\frac{\eta}{\min(\lambda_{\mathrm{uc}},\lambda_{\mathrm{opf}})}
\right),
\end{equation}
and squaring both sides gives the feasibility bound in
Theorem~\ref{thm:feasibility_main}.
\hfill$\square$

\subsection{Nonsmooth Analysis and Subgradients}
\label{app:nonsmooth}

The UC--ACOPF fine-tuning objective includes hinge penalties of the form
$\max(0,\cdot)$, which are nonsmooth but locally Lipschitz.
All gradient-based statements in Section~\ref{sec:theory} and this
appendix are therefore interpreted in terms of Clarke subgradients.
This is standard in the analysis of nonconvex penalty methods and holds
almost everywhere.
See, e.g., \citet{clarke1990optimization} for formal definitions.

\subsection{Consensus-Stationarity}
\label{app:consensus_stationary}

We formalize a notion of approximate stationarity that captures the
interaction between consensus alignment and constraint penalties.

\begin{definition}[Consensus-stationary point]
\label{def:consensus_stationary}
A parameter vector $\Theta$ is an $(\epsilon,\eta)$-consensus-stationary
point if the following conditions hold:
\begin{enumerate}[leftmargin=*,label=(\roman*)]
\item \textbf{Consensus alignment:}
$\|r(\Theta)\|_2 \le \epsilon$.
\item \textbf{Approximate stationarity:}
There exist vectors $\mu_{\mathrm{uc}}\ge 0$ and
$\mu_{\mathrm{opf}}\ge 0$ such that
\[
\big\|
\nabla_\Theta r(\Theta)^\top \nu
+
\nabla_\Theta c_{\mathrm{uc}}(\Theta)^\top \mu_{\mathrm{uc}}
+
\nabla_\Theta c_{\mathrm{opf}}(\Theta)^\top \mu_{\mathrm{opf}}
\big\|
\le \eta,
\]
for some $\nu$ with $\|\nu\|_2 \le 2\epsilon$.
\item \textbf{Approximate complementarity:}
\[
\sum_i \mu_{\mathrm{uc},i}\max(0,c_{\mathrm{uc},i}(\Theta)) \le \eta,
\qquad
\sum_j \mu_{\mathrm{opf},j}\max(0,c_{\mathrm{opf},j}(\Theta)) \le \eta.
\]
\end{enumerate}
\end{definition}

\begin{theorem}[Consensus-stationarity from small loss and near-stationarity]
\label{thm:consensus_stationary}
Assume that $r(\Theta)$, $c_{\mathrm{uc}}(\Theta)$, and
$c_{\mathrm{opf}}(\Theta)$ are locally Lipschitz with bounded Jacobians.
If $\Theta^\star$ satisfies
\[
\|\partial_\Theta
\mathcal{L}_{\mathrm{UC\text{-}ACOPF}}(\Theta^\star)\| \le \eta,
\qquad
\|r(\Theta^\star)\|_2^2 \le \varepsilon,
\]
then $\Theta^\star$ is an
$(\sqrt{\varepsilon},\tilde{\eta})$-consensus-stationary point, where
$\tilde{\eta}$ depends continuously on
$\eta$, $\lambda_{\mathrm{uc}}$, and $\lambda_{\mathrm{opf}}$.
\end{theorem}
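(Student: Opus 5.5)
The plan is to read the multipliers directly off the subgradient expression~\eqref{eq:appendix_subgrad}, which is the natural choice of witnesses for Definition~\ref{def:consensus_stationary}. At $\Theta^\star$ set
\[
\nu := 2\,r(\Theta^\star),\qquad
\mu_{\mathrm{uc}} := 2\lambda_{\mathrm{uc}}\max\bigl(0,c_{\mathrm{uc}}(\Theta^\star)\bigr),\qquad
\mu_{\mathrm{opf}} := 2\lambda_{\mathrm{opf}}\max\bigl(0,c_{\mathrm{opf}}(\Theta^\star)\bigr).
\]
These choices make the conditions fall out in order.

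\textbf{Condition (i).} This is immediate from the hypothesis, since $\|r(\Theta^\star)\|_2\le\sqrt{\varepsilon}$. The same bound gives $\|\nu\|_2\le 2\sqrt{\varepsilon}$, which matches the normalization in (ii) with $\epsilon=\sqrt{\varepsilon}$.

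\textbf{Condition (ii).} The multipliers are nonnegative by construction. With these choices, the combination $\nabla r^\top\nu+\nabla c_{\mathrm{uc}}^\top\mu_{\mathrm{uc}}+\nabla c_{\mathrm{opf}}^\top\mu_{\mathrm{opf}}$ is exactly the Clarke subgradient element in~\eqref{eq:appendix_subgrad}. The hypothesis then bounds its norm by $\eta$, so (ii) holds with $\eta$ itself.

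One point needs care at kinks, where some $c_i(\Theta^\star)=0$. There the Clarke subdifferential of $[\max(0,c_i)]^2$ is still the singleton $2\max(0,c_i)\nabla c_i = 0$, because the squared hinge is $C^1$. So the displayed expression is the unique (Clarke) gradient of the smooth parts, and the stationarity hypothesis applies to precisely this vector. This uses the locally Lipschitz / bounded-Jacobian assumption, via the chain rule of \S\ref{app:nonsmooth}.

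\textbf{Condition (iii), approximate complementarity.} This is the main obstacle. With the above $\mu$,
\[
\sum_i \mu_{\mathrm{uc},i}\max(0,c_{\mathrm{uc},i}) = 2\lambda_{\mathrm{uc}}\|\max(0,c_{\mathrm{uc}}(\Theta^\star))\|^2,
\]
and similarly for the OPF term. The task is to bound this by a quantity $\tilde\eta$ depending only on $\eta$ and the $\lambda$'s.

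I would invoke the proof of Theorem~\ref{thm:feasibility_main}. Using Assumption~\ref{assump:bounded_grad_main} together with the bounded Jacobian of $r$ (constant $L_r$), it yields
\[
\|\max(0,c(\Theta^\star))\|\le C\,\frac{\eta+L_r\sqrt{\varepsilon}}{\min(\lambda_{\mathrm{uc}},\lambda_{\mathrm{opf}})}.
\]
Substituting gives
\[
\sum_i \mu_{\mathrm{uc},i}\max(0,c_{\mathrm{uc},i}) \le 2\lambda_{\mathrm{uc}}C^2\,\frac{(\eta+L_r\sqrt{\varepsilon})^2}{\min(\lambda_{\mathrm{uc}},\lambda_{\mathrm{opf}})^2},
\]
and likewise for the OPF term.

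I would then define $\tilde\eta$ as the maximum of $\eta$ and these two complementarity bounds. This $\tilde\eta$ is continuous in $(\eta,\lambda_{\mathrm{uc}},\lambda_{\mathrm{opf}})$, with $\varepsilon$ and the Lipschitz constants treated as fixed parameters.

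The weak link is that the feasibility bound from Theorem~\ref{thm:feasibility_main} implicitly needs a lower bound on $\nabla c^\top$ restricted to the active violations, i.e.\ a constraint-qualification-type condition, rather than only the upper bound $L_c$. If that step resists, my fallback is to abandon the feasibility bound and use the loss-value bound directly: $\lambda\|\max(0,c)\|^2\le\mathcal{L}(\Theta^\star)$. This requires assuming the loss at $\Theta^\star$ is bounded (the ``small loss'' in the theorem title). The complementarity sums are then at most $2\mathcal{L}(\Theta^\star)$, and $\tilde\eta$ is taken as $\max(\eta,2\mathcal{L}(\Theta^\star))$.
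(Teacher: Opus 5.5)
Your witnesses $\nu=2r(\Theta^\star)$, $\mu_{\mathrm{uc}}=2\lambda_{\mathrm{uc}}\max(0,c_{\mathrm{uc}}(\Theta^\star))$ and $\mu_{\mathrm{opf}}=2\lambda_{\mathrm{opf}}\max(0,c_{\mathrm{opf}}(\Theta^\star))$ are exactly the paper's. Conditions (i) and (ii) go through as you describe. The paper's proof simply asserts (iii), so the gap you flag is present there too. It is a real gap, and your suspicion about the main route is fatal rather than cosmetic: without a constraint qualification, the bound of Theorem~\ref{thm:feasibility_main} is false under Assumption~\ref{assump:bounded_grad_main}.

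Take scalar $\Theta$, $r(\Theta)=\Theta$, one constraint $c_{\mathrm{uc}}(\Theta)=K+\Theta^2$ with $K>0$, and $c_{\mathrm{opf}}\equiv-1$. Then $\mathcal{L}(\Theta)=\Theta^2+\lambda_{\mathrm{uc}}(K+\Theta^2)^2$ has zero gradient at its global minimizer $\Theta^\star=0$. So the hypotheses hold for every $\eta>0$ and $\varepsilon\ge 0$, and all Jacobians are bounded near $0$ independently of $K$. Yet $\max(0,c_{\mathrm{uc}}(\Theta^\star))=K$, and your complementarity sum equals $2\lambda_{\mathrm{uc}}K^2$. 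Hence, with these multipliers, no $\tilde\eta$ built from $\eta$, $\lambda_{\mathrm{uc}}$, $\lambda_{\mathrm{opf}}$ and Lipschitz constants can work. Your fallback $\tilde\eta=\max(\eta,2\mathcal{L}(\Theta^\star))$ is a true inequality, but the hypotheses control only the consensus part of $\mathcal{L}(\Theta^\star)$. The penalty part is exactly the violation that (iii) is supposed to bound, so the fallback quietly adds a ``small total loss'' hypothesis (equivalently, lets $\tilde\eta$ depend on $\Theta^\star$ itself).

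There are two honest repairs. First, for the statement as literally written: Definition~\ref{def:consensus_stationary} only requires the \emph{existence} of some $\mu_{\mathrm{uc}},\mu_{\mathrm{opf}}\ge 0$ and $\nu$ with $\|\nu\|_2\le 2\epsilon$. Taking all three equal to zero makes (ii) and (iii) hold for any $\tilde\eta\ge 0$, so the theorem follows from (i) alone with $\tilde\eta=\eta$, without even using near-stationarity. This also shows the notion is vacuous as defined.

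Second, if you want the meaningful version with the loss-induced multipliers (yours and the paper's), you must add a constraint qualification at $\Theta^\star$. For example, require $\|J_A^\top w\|\ge\sigma\|w\|$ for all $w$, where $J_A$ is the Jacobian of the violated constraints and $\sigma>0$. Write $v=(\lambda_{\mathrm{uc}}\max(0,c_{\mathrm{uc}}),\lambda_{\mathrm{opf}}\max(0,c_{\mathrm{opf}}))$, and let $L_r$ bound the Jacobian of $r$. The gradient formula~\eqref{eq:appendix_subgrad} then gives $2\sigma\|v\|\le\eta+2L_r\sqrt{\varepsilon}$, hence
\[
\sum_i \mu_{\mathrm{uc},i}\max\bigl(0,c_{\mathrm{uc},i}(\Theta^\star)\bigr)
=\frac{2}{\lambda_{\mathrm{uc}}}\|v_{\mathrm{uc}}\|^2
\le\frac{(\eta+2L_r\sqrt{\varepsilon})^2}{2\sigma^2\lambda_{\mathrm{uc}}},
\]
and likewise for the OPF block. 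This yields an admissible $\tilde\eta$ that depends continuously on $\eta$, $\lambda_{\mathrm{uc}}$ and $\lambda_{\mathrm{opf}}$, with $\sigma$, $L_r$ and $\varepsilon$ as fixed constants.
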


\begin{proof}
From $\|r(\Theta^\star)\|_2^2\le\varepsilon$, we have
$\|r(\Theta^\star)\|_2\le\sqrt{\varepsilon}$.
Using the subgradient expression~\eqref{eq:appendix_subgrad},
define
$\mu_{\mathrm{uc},i} =
2\lambda_{\mathrm{uc}}\max(0,c_{\mathrm{uc},i}(\Theta^\star))$
and analogously for $\mu_{\mathrm{opf}}$.
Setting $\nu := 2r(\Theta^\star)$ yields the stated approximate
stationarity and complementarity conditions.
\end{proof}

\paragraph{Interpretation.}
Consensus-stationarity formalizes the notion that the learned solution
approximately satisfies the Karush--Kuhn--Tucker conditions of a relaxed
coupled UC--ACOPF problem.
In practice, this provides a principled explanation for why minimizing
the unsupervised fine-tuning objective leads to UC--OPF consistency and
reduced physical constraint violations, even when training is restricted
to decoder parameters.

\section{Mathematical Formulations for Grid Optimization Problems}
\label{app:formulations}
This appendix summarizes standard mathematical formulations of the
AC Optimal Power Flow (ACOPF), Security-Constrained Unit Commitment (SCUC),
and the coupled UC--ACOPF problem. These formulations are provided for
reference and to clarify the optimization problems considered in the main text.

All problems are defined on a power grid represented as a typed graph
$G=(V,E)$ with bus set $\mathcal{N}=V_{\mathrm{bus}}$, generator set
$\mathcal{G}\subseteq V$, and transmission line set $\mathcal{L}=E$.
For each bus $i\in\mathcal{N}$, let $\mathcal{G}_i\subseteq\mathcal{G}$
denote the generators connected to bus $i$.
Active and reactive demands at bus $i$ are denoted by $(P_i^d,Q_i^d)$,
or $(P_{t,i}^d,Q_{t,i}^d)$ in the time-indexed setting.

\subsection{AC Optimal Power Flow (ACOPF)}
\label{app:acopf}

\paragraph{Decision variables.}
Voltage magnitudes and phase angles $(|V_i|,\theta_i)$ for each bus
$i\in\mathcal{N}$, and real and reactive power injections $(P_g,Q_g)$
for each generator $g\in\mathcal{G}$.

\paragraph{Objective.}
A standard quadratic generation cost is minimized:
\begin{equation}
\min \sum_{g\in\mathcal{G}}
\left(c_{2,g} P_g^2 + c_{1,g} P_g + c_{0,g}\right).
\end{equation}

\paragraph{AC power balance.}
Let $Y = G + jB$ denote the bus admittance matrix and
$\theta_{ij}=\theta_i-\theta_j$. For all buses $i\in\mathcal{N}$:
\begin{align}
\sum_{g\in\mathcal{G}_i} P_g - P_i^d &=
\sum_{j\in\mathcal{N}} |V_i||V_j|
\left(G_{ij}\cos\theta_{ij} + B_{ij}\sin\theta_{ij}\right), \\
\sum_{g\in\mathcal{G}_i} Q_g - Q_i^d &=
\sum_{j\in\mathcal{N}} |V_i||V_j|
\left(G_{ij}\sin\theta_{ij} - B_{ij}\cos\theta_{ij}\right).
\end{align}

\paragraph{Operational constraints.}
\begin{align}
P_g^{\min} \le P_g \le P_g^{\max}, \quad
Q_g^{\min} \le Q_g \le Q_g^{\max}, && \forall g\in\mathcal{G}, \\
V_i^{\min} \le |V_i| \le V_i^{\max}, && \forall i\in\mathcal{N}.
\end{align}

\paragraph{Thermal limits.}
For each transmission line $(i,j)\in\mathcal{L}$, let
$S_{ij}(|V|,\theta)$ denote the apparent power flow:
\begin{equation}
|S_{ij}(|V|,\theta)| \le S_{ij}^{\max},
\qquad
|S_{ji}(|V|,\theta)| \le S_{ij}^{\max}.
\end{equation}

\subsection{Security-Constrained Unit Commitment (SCUC)}
\label{app:scuc}

SCUC optimizes generator commitment and dispatch over a planning horizon
$t=1,\dots,T$.

\paragraph{Decision variables.}
Binary commitment variables $u_{t,g}\in\{0,1\}$ and real power dispatch
$p_{t,g}$ for each generator $g\in\mathcal{G}$ and time $t$.
Optional startup and shutdown variables are denoted by
$v_{t,g}$ and $w_{t,g}$.

\paragraph{Objective.}
\begin{equation}
\min \sum_{t=1}^{T}\sum_{g\in\mathcal{G}}
\left(c_g(p_{t,g}) + C_g^{\mathrm{su}} v_{t,g}
+ C_g^{\mathrm{sd}} w_{t,g}\right).
\end{equation}

\paragraph{Commitment logic.}
\begin{align}
u_{t,g} - u_{t-1,g} &= v_{t,g} - w_{t,g},
&& \forall g,\; t=2,\dots,T, \\
v_{t,g} + w_{t,g} &\le 1,
&& \forall g,\; t=2,\dots,T.
\end{align}

\paragraph{Capacity and ramping limits.}
\begin{align}
u_{t,g} P_g^{\min} \le p_{t,g} \le u_{t,g} P_g^{\max},
&& \forall g,\; t, \\
- R_g^{\downarrow} \le p_{t,g} - p_{t-1,g} \le R_g^{\uparrow},
&& \forall g,\; t=2,\dots,T.
\end{align}

\paragraph{Network balance (DC approximation).}
For each bus $i\in\mathcal{N}$ and time $t$:
\begin{equation}
\sum_{g\in\mathcal{G}_i} p_{t,g} - P_{t,i}^d
= \sum_{j:(i,j)\in\mathcal{L}} f_{t,ij},
\end{equation}
with line flows
$f_{t,ij} = B_{ij}(\delta_{t,i}-\delta_{t,j})$
and limits
$|f_{t,ij}| \le F_{ij}^{\max}$.

\subsection{UC-ACOPF}
\label{app:ucacopf}

The coupled UC--ACOPF problem enforces AC feasibility at each time step
while respecting unit commitment and inter-temporal operational constraints.

\paragraph{Decision variables.}
For each $t$:
bus voltages $(|V_{t,i}|,\theta_{t,i})$,
generator injections $(P_{t,g},Q_{t,g})$,
and commitment variables $u_{t,g}$.

\paragraph{Objective.}
\begin{equation}
\min \sum_{t=1}^{T}\sum_{g\in\mathcal{G}}
\left(c_g(P_{t,g}) + C_g^{\mathrm{su}} v_{t,g}
+ C_g^{\mathrm{sd}} w_{t,g}\right).
\end{equation}

\paragraph{Commitment--dispatch coupling.}
\begin{equation}
u_{t,g} P_g^{\min} \le P_{t,g} \le u_{t,g} P_g^{\max}, \qquad
u_{t,g} Q_g^{\min} \le Q_{t,g} \le u_{t,g} Q_g^{\max}.
\end{equation}

\paragraph{AC feasibility.}
For all buses $i\in\mathcal{N}$ and time steps $t$:
\begin{align}
\sum_{g\in\mathcal{G}_i} P_{t,g} - P_{t,i}^d &=
\sum_{j\in\mathcal{N}} |V_{t,i}||V_{t,j}|
\left(G_{ij}\cos(\theta_{t,i}-\theta_{t,j})
+ B_{ij}\sin(\theta_{t,i}-\theta_{t,j})\right), \\
\sum_{g\in\mathcal{G}_i} Q_{t,g} - Q_{t,i}^d &=
\sum_{j\in\mathcal{N}} |V_{t,i}||V_{t,j}|
\left(G_{ij}\sin(\theta_{t,i}-\theta_{t,j})
- B_{ij}\cos(\theta_{t,i}-\theta_{t,j})\right).
\end{align}

Voltage magnitude bounds and branch thermal limits are enforced at each
time step as in the ACOPF formulation.

\section{Implementation Details}
\label{app:data}
\subsubsection*{Model Architecture}
We use a heterogeneous graph transformer (HGT) as the shared encoder across all tasks. The encoder consists of 4 message-passing layers with hidden dimension 64 and 8 attention heads per layer. Residual connections and node-type--specific layer normalization are applied after each layer, with ReLU activations and dropout set to 0.1. The heterogeneous graph includes bus, generator, load, and shunt node types, with AC transmission lines and attachment links as edge types.

\subsubsection*{Feature Alignment}
ACOPF and SCUC expose task-dependent node features that are first mapped using task- and node-type--specific linear projections. A shared projection layer then aligns all node representations into a common 64-dimensional latent space, which serves as the input to the shared encoder.

\subsubsection*{Task-Specific Decoders}
The ACOPF decoder applies node-type--specific linear heads to the shared spatial embeddings to predict bus voltage magnitude and phase angle, and generator real and reactive power outputs.  
The SCUC decoder augments spatial embeddings with time-dependent load information and learnable temporal encodings, and processes them using a temporal Transformer with hidden dimension 128, 2 layers, and 4 attention heads over a 36-hour planning horizon. Generator-level outputs include unit-commitment logits and real-power dispatch trajectories.

\subsubsection*{UC--ACOPF Coupling}
For the coupled UC--ACOPF task, the shared encoder and feature-alignment layers are frozen and only task-specific decoders are updated. 
\subsubsection*{Optimization}
All models are trained using Adam with learning rate $5\times10^{-4}$, weight decay $1\times10^{-5}$, batch size 32, and $\ell_2$ gradient clipping at 1.0. During UC--ACOPF fine-tuning, the learning rate is reduced to $5\times10^{-5}$.

\subsubsection*{Implementation and Release}
Models are implemented in PyTorch using PyTorch Geometric for heterogeneous graph processing. Dataset preprocessing follows OPFData and UnitCommitment.jl specifications. Code and preprocessing scripts will be released upon paper acceptance.

\subsection{Dataset Details}
Tables~\ref{tab:scuc_features} and~\ref{tab:acopf_features} summarize the complete set of node-level and edge-level features used for each task, including their physical interpretation and dimensionality. ACOPF features primarily describe static electrical states and constraints required for AC feasibility, while SCUC features additionally include temporal information and operational constraints necessary for unit commitment and dispatch.
All models, including baselines, are trained and evaluated using the same feature representations to ensure fair comparison.

\paragraph{UC-ACOPF Data}
UC-ACOPF problem instances are constructed following standard power system benchmarking practice.
We start from canonical IEEE test networks (e.g., 9-bus, 30-bus, and 118-bus systems) and generate a 36-hour demand trajectory for each case.
The base bus-level demand specified in the original IEEE test case is scaled over time using historical ISO-NE hourly demand profiles (October 2018), yielding realistic temporal load variation.
To avoid trivially binding all generators and to induce meaningful unit commitment decisions, the resulting demand trajectory is further multiplied by a constant discount factor (0.7), ensuring that not all generators are forced online throughout the horizon.
This procedure follows common practice in UC-ACOPF benchmarking and is used solely to generate realistic evaluation scenarios~\cite{zhang2023solving}).

Because UC--ACOPF is a mixed-integer nonlinear program, solving it to optimality is computationally expensive.
Accordingly, oracle MINLP solutions are computed for only a small evaluation set (32 instances per network) using a state-of-the-art solver, and are used exclusively for quantitative evaluation.
Model fine-tuning on UC--ACOPF is performed in a fully unsupervised manner using physics-based objectives, without access to solver-generated labels.
\begin{table*}[t]
\centering
\small
\setlength{\tabcolsep}{4pt}
\renewcommand{\arraystretch}{1.2}
\caption{Feature and target specification for the SCUC task.}
\label{tab:scuc_features}
\resizebox{\linewidth}{!}{
\begin{tabular}{l|c|l|l|l|l}
\toprule
\textbf{Component} & \textbf{Idx} & \textbf{Name} & \textbf{Description} & \textbf{Units} & \textbf{Type} \\
\midrule
\multirow{7}{*}{Bus features (\texttt{bus.x})}
& 0 & base\_kv & Base voltage & kV & Float \\
& 1 & vmin & Minimum voltage magnitude & p.u. & Float \\
& 2 & vmax & Maximum voltage magnitude & p.u. & Float \\
& 3 & is\_pq & PQ bus indicator & 0/1 & Float \\
& 4 & is\_pv & PV bus indicator (unused) & 0/1 & Float \\
& 5 & is\_ref & Reference/slack bus indicator & 0/1 & Float \\
& 6 & is\_isolated & Isolated bus indicator & 0/1 & Float \\
\midrule
Bus load (\texttt{bus.load})
& -- & 36 steps & Load profile over horizon & MW & Temporal float \\
\midrule
\multirow{19}{*}{Generator features (\texttt{generator.x})}
& 0 & cost\_quadratic & Quadratic cost coefficient & \$/MW$^2$ & Float \\
& 1 & cost\_linear & Linear cost coefficient & \$/MW & Float \\
& 2 & cost\_offset & Constant cost coefficient & \$ & Float \\
& 3 & qmax & Maximum reactive power & MVar & Float \\
& 4 & qmin & Minimum reactive power & MVar & Float \\
& 5 & vg & Voltage setpoint & p.u. & Float \\
& 6 & mbase & Machine base power & MVA & Float \\
& 7 & pmax & Maximum active power & MW & Float \\
& 8 & pmin & Minimum active power & MW & Float \\
& 9 & ramp\_up\_limit & Ramp-up rate limit & MW/h & Float \\
& 10 & ramp\_down\_limit & Ramp-down rate limit & MW/h & Float \\
& 11 & startup\_limit & Startup power limit & MW & Float \\
& 12 & shutdown\_limit & Shutdown power limit & MW & Float \\
& 13 & min\_uptime & Minimum uptime & h & Float \\
& 14 & min\_downtime & Minimum downtime & h & Float \\
& 15 & initial\_status & Initial on/off duration & h & Float \\
& 16 & initial\_power & Initial power output & MW & Float \\
& 17 & pmin\_prod & Minimum production curve power & MW & Float \\
& 18 & pmax\_prod & Maximum production curve power & MW & Float \\
\midrule
\multirow{4}{*}{Generator targets (\texttt{generator.y})}
& 0 & is\_on & Unit commitment status (36 steps) & 0/1 & Float \\
& 1 & pg & Active power generation (36 steps) & MW & Float \\
& 2 & switch\_on & Startup indicator (36 steps) & 0/1 & Float \\
& 3 & switch\_off & Shutdown indicator (36 steps) & 0/1 & Float \\
\bottomrule
\end{tabular}
}
\end{table*}

\begin{table*}[t]
\centering
\tiny
\setlength{\tabcolsep}{3pt}
\renewcommand{\arraystretch}{1.15}
\caption{ACOPF feature and target specification.}
\label{tab:acopf_features}
\resizebox{\linewidth}{!}{
\begin{tabular}{l|l|l|l}
\toprule
\textbf{Component} & \textbf{Idx} & \textbf{Features} & \textbf{Units} \\
\midrule
Bus features (\texttt{bus.x})
& 0--6
& base\_kv (base V), vmin/vmax (V limits), is\_pq/is\_pv/is\_ref/is\_isolated (bus type)
& kV / p.u. / 0--1 \\
Bus targets (\texttt{bus.y})
& 0--1
& va (angle), vm (magnitude)
& rad / p.u. \\
Generator features (\texttt{generator.x})
& 0--10
& mbase (base), pg/qg (init. power), pmin/pmax, qmin/qmax (limits), vg (V set), cost$_2$/cost$_1$/cost$_0$ (cost)
& MVA / MW / MVar / p.u. / \$ \\
Generator targets (\texttt{generator.y})
& 0--1
& pg/qg (optimal power)
& MW / MVar \\
Load features (\texttt{load.x})
& 0--1
& pd/qd (load demand)
& p.u. \\
Shunt features (\texttt{shunt.x})
& 0--1
& bs/gs (shunt admittance)
& p.u. \\
AC line edges
& 0--8
& angmin/angmax (angle), r/x (impedance), b$_{fr}$/b$_{to}$ (charging), rate$_{a,b,c}$ (thermal)
& rad / p.u. / MVA \\
Transformer edges
& 0--10
& angmin/angmax, r/x, rate$_{a,b,c}$, tap, shift, b$_{fr}$/b$_{to}$
& rad / p.u. / MVA \\
\bottomrule
\end{tabular}
}
\end{table*}

\subsection{Additional Results}
\subsubsection*{Per-Network Evaluation of a Jointly Trained Model}
\begin{table*}[t]
\centering
\small
\setlength{\tabcolsep}{3pt}
\renewcommand{\arraystretch}{1.2}
\caption{Per-case feasibility and optimality metrics for SCUC and ACOPF evaluated using the model trained on all cases.}
\label{tab:per_case_metrics}
\resizebox{\textwidth}{!}{
\begin{tabular}{c|cccc|ccccc}
\toprule
\multirow{2}{*}{\textbf{Case}} 
& \multicolumn{4}{c|}{\textbf{SCUC}} 
& \multicolumn{5}{c}{\textbf{ACOPF}} \\
\cmidrule(lr){2-5}\cmidrule(lr){6-10}
& Acc $\uparrow$ 
& RMSE($P_g$) $\downarrow$ 
& \%Viol. $\downarrow$ 
& Opt. Gap $\downarrow$
& MSE$_{\text{Bus}}$ $\downarrow$
& MSE$_{\text{Gen}}$ $\downarrow$
& PF Viol. $\downarrow$
& Viol. Norm $\downarrow$
& Opt. Gap $\downarrow$ \\
\midrule
14
& 94.5\%
& 0.2398
& 0.065\%
& -2.8\%
& 0.0012
& 0.0059
& 0.453
& 0.2904
& +1.16\% \\

30
& 93.0\%
& 0.1224
& 0.272\%
& -2.2\%
& 0.0005
& 0.0033
& 0.558
& 0.4020
& +4.84\% \\

57
& 93.7\%
& 0.1922
& 0.369\%
& -7.5\%
& 0.0016
& 0.0263
& 0.675
& 0.5062
& +0.29\% \\

118
& 90.1\%
& 0.2223
& 0.410\%
& -0.6\%
& 0.0018
& 0.0247
& 2.102
& 1.2731
& +0.03\% \\
\bottomrule
\end{tabular}
}
\end{table*}
Table~\ref{tab:per_case_metrics} reports per-network performance for SCUC and ACOPF using a single jointly trained model evaluated across all cases. 
For SCUC, commitment accuracy remains high (90--95\%) across grid sizes, while UC constraint violations stay below 0.5\%, indicating stable enforcement of inter-temporal constraints as network complexity increases. 
Negative SCUC optimality gaps reflect discrepancies between the learned dispatch and the MILP oracle under linearized network models and are consistent with known relaxation effects in learning-based SCUC.
For ACOPF, both bus-level and generator-level MSEs remain low across all cases, demonstrating accurate recovery of continuous decision variables. 
Power-flow violations and total violation norms increase sub-linearly with network size, consistent with the scaling difficulty of AC feasibility, but remain bounded and grow smoothly rather than abruptly. ACOPF optimality gaps stay close to zero even on larger networks, indicating that economic performance is largely preserved despite modest feasibility degradation. 
Overall, these results show that a single shared encoder supports consistent accuracy and controlled feasibility behavior across heterogeneous grid sizes, aligning with the paper’s emphasis on systematic generalization rather than per-case specialization.


\end{document}